\scriptsize\linespread{0.8}\ttfamily,
\newtheorem{definition}{Definition}
\newtheorem{example}{Example}
\newcommand{\ignore}[1]{}
\newcommand{\fixme}[1]{\textcolor{red}{#1}}
\newcolumntype{L}{>{$\displaystyle}l<{$}}
\newcolumntype{C}{>{$\displaystyle}c<{$}}
\newcolumntype{R}{>{$\displaystyle}r<{$}}
\begin{document}

\begin{frontmatter}

    \title{Weighted Positive Binary Decision Diagrams\\for Exact Probabilistic Inference}

    \author{Giso H. Dal\ \ and\ \ Peter J.F. Lucas,\\
        Institute for Computing and Information Sciences, \\
        Radboud University Nijmegen, The Netherlands \\
        \texttt{\{gdal,peterl\}@cs.ru.nl}
    }

    \begin{abstract}
    Recent work on weighted model counting has been very successfully applied to the problem of probabilistic inference in Bayesian networks. The probability distribution is encoded into a Boolean normal form and compiled to a target language, in order to represent local structure expressed among conditional probabilities more efficiently. We show that further improvements are possible, by exploiting the knowledge that is lost during the encoding phase and incorporating it into a compiler inspired by Satisfiability Modulo Theories. Constraints among variables are used as a background theory, which allows us to optimize the Shannon decomposition. We propose a new language, called \emph{Weighted Positive Binary Decision Diagrams}, that reduces the cost of probabilistic inference by using this decomposition variant to induce an arithmetic circuit of reduced size.
\end{abstract}
\begin{keyword}
knowledge compilation, probabilistic inference, weighted model counting, Bayesian networks, binary decision diagrams.
\end{keyword}

\end{frontmatter}

\section{Introduction}\label{sec:introduction}

Bayesian networks, BNs for short, have been a subject of great interest partly due to their contribution in solving real-life problems that involve uncertainty. Bayesian networks are probabilistic
graphical models that represent joint probability distributions concisely by factoring them into conditional probabilities based on independence assumptions, in order to perform inference more efficiently~\cite{pearl1988probabilistic}. Further representational and
computational advances have been made by exploiting causal independence~\cite{heckerman96}, as well as contextual independence~\cite{boutilier1996context} and determinism~\cite{jensen2013approximations} expressed in conditional probability tables (CPTs). In order to capture these independencies local to CPTs, Bayesian networks have been represented as weighted Boolean formulas~\cite{bacchus2003dpll,chavira2008probabilistic}, reducing inference to \emph{Weighted Model Counting}~(WMC), or \emph{weighted~\#SAT}~\cite{bacchus2003dpll}. By representing a Bayesian network as Boolean formula $f$
in \emph{conjunctive normal form} (CNF), it can be compiled into a more concise normal form, or language, that renders inference a polytime operation in the size of the representation~\cite{wachter2007logical}.

A joint probability space with $n$ Boolean variables has $2^n$
interpretations. It is therefore necessary to be able to reason with
sets of interpretations, requiring a symbolic representation
\cite{burch1990symbolic}. Symbolic inference unifies the work of
probabilistic inference and the extensive research done in the field
of model checking, verification and satisfiability
\cite{shachter1990symbolic}. \emph{Ordered Binary Decision Diagrams}
(OBDDs) are based on Shannon decompositions and have been a very
influential symbolic representation that reduces compilation to the
problem of finding the variable ordering resulting in the optimal
factoring.

We focus on the disadvantage of the approaches in recent work that encode a BN as an independent CNF $f$, motivated by the ability to use off-the-shelf SAT-solvers. While maintaining this ability, we exploit the knowledge that is lost during the encoding without requiring this independence.

Our contributions are the following. We propose a weighted variant of
OBDDs, called \emph{Weighted Positive Binary Decision Diagrams}
(WPBDDs), which are based on \emph{positive Shannon decompositions},
allowing constraints in BNs to be represented more concisely. We use
probabilities as \emph{symbolic} edge weights, reducing the
search space exponentially. An optimized compilation
algorithm is introduced, inspired by the field of Satisfiability Modulo Theories
(SMT), namely a \emph{lazy SMT-solver}
\cite{barrett2009satisfiability}. It provides the means to view
constraints among variables in the encoding as background theory
$\mathcal{T}$ which supports the SAT-solver, allowing \emph{constant time conditioning}. We compile the conditional
probability tables of a BN explicitly,
but leave implicit the domain closure implied by the encoding. This
approach allows us to remove by up to a third of the clauses in the
encoding.

A comparison is provided with the state-of-the-art CUDD
(CU Decision Diagram \cite{cudd2015}) and SDD
(Sentential Decision Diagram \cite{choi2013compiling})
compilers and we show that WPBDDs induce arithmetic circuits that are
60\% reduced in size \emph{on average} compared to a corresponding
OBDD circuits at representing over 30 publicly available BNs. We show
an inference speedup of over 2.6 times on average compared to Weighted
Model Counting with OBDDs, and an average speedup between 5 and 1000 times compared to different implementations of the Junction Tree algorithm, making WPBDDs a valuable addition to the
field of exact probabilistic inference.

After preliminaries and background (Section~\ref{sec:background}), we introduce WPBDDs (Section~\ref{sec:wpbdd}). The process of using a BN to perform exact inference by WMC is explained (Section~\ref{sec:symbolicinference}) in addition to its optimization (Section~\ref{sec:optimizations}). We conclude with experimental results (Section~\ref{sec:results}), and review achievements (Section \ref{sec:conclusion}).
First we summarize related work.

\section{Related Work}\label{sec:relatedwork}

Probabilistic inference is a hard computational problem that can be achieved by marginalizing out non-evidence variables from a joint distribution, requiring an exponential number of operations in the worst case. Efforts toward efficient exact probabilistic inference attempt to find a concise factorization, e.g., BNs represent a joint probability distribution as a multiplicative factorization, exploiting independencies among variables. Further improvements to this factorization have been made by using propositional and first order logic~\cite{nilsson1986probabilistic,halpern1990analysis,poole2003first}. Representing probability distributions as Boolean functions empowers probabilistic inference with the tools developed for VLSI-CAD design, by using symbolic representations and Boolean algebra for minimization. \emph{Symbolic Probabilistic Inference} (SPI) \cite{shachter1990symbolic} is a good example of this, which is currently more commonly referred to as inference by WMC or \#SAT~\cite{bacchus2003algorithms}.

A BN can be viewed as a constraint satisfaction problem (CSP) and
translated to a satisfiability (SAT) instance in \emph{conjunctive
  normal form} (CNF), a form commonly used in satisfiability
solving. Various encodings of CSPs have been proposed, namely log,
direct \cite{walsh2000sat}, order \cite{bailleux2003efficient},
compact order \cite{tanjo2011compact}, log-support encoding
\cite{gavanelli2007log}, etc. In the context of BNs, a probability
distribution can be considered a pseudo Boolean function $f: \{0,1\}^n
\rightarrow \mathbb{R}$, with arity $n$, which can be uniquely written
as an exponentially sized multi-linear polynomial
\cite{minato2007compiling,poon2011sum}. Others have used the direct
encoding \cite{chavira2008probabilistic}, or a combination between the
direct and order encoding \cite{sang2005performing}, where a BN is
viewed as a set of discrete real valued functions, where each function
represents a distinct CPT.

Inference by WMC is motivated by linear time complexity in the size of the representation~\cite{bozga1999representation}, where the common goal is to exploit local structure \cite{zhang1999role}. Choosing a representation or \emph{language} to compile to is therefore a critical task, where one must deal with the balance between the functions a language can represent concisely versus its algorithmic properties. Initial attempts include probability trees~\cite{boutilier1996context,cano2000penniless,kozlov1997nonuniform} and recursive (factored) probability trees~\cite{cano2009recursive}, which focus on concisely representing each CPT independently, allowing their usage in inference algorithms directly. Probabilistic Decision Graphs (PDG) have even shown that the smallest PDG is at least as small as the smallest Junction tree for the same distribution~\cite{jaeger2004probabilistic}.

Current WMC approaches to inference divide into \emph{search} and \emph{compilation} methods \cite{gomes2008model}. Typical search algorithms are based on DPLL-style SAT solvers that do an exhaustive run to count all satisfying models \cite{sang2005performing}. Recording SAT evaluation paths (i.e. resolution steps) as a compiled structure (e.g. an OBDD), yields one possible factoring. We refer to finding the optimal factoring given all variable orderings, as \emph{exact} compilation.

Compilation performance has been improved by clause learning \cite{beame2003memoization}, formula caching \cite{majercik1998using}, bounding \cite{fischer2008counting}, and using canonical languages. Representational advances include symmetry detection \cite{sen2009prdb,kersting2009counting}, support for causal
independence \cite{li2008exploiting} and using \emph{read-once functions}\cite{sen2010read}.

Representations relevant in the context of BN compilation are \emph{AND/OR Multi-Valued Decision Diagrams} (AOMDD) \cite{mateescu2008and}, \emph{Sentential Decision Diagrams} (SDD) \cite{choi2013compiling}, Zero-suppressed Binary decision diagrams (ZBDD) \cite{minato2007compiling} and Ordered Binary Decision Diagrams (OBDD) \cite{nielsen2000using}, which view probabilities as auxiliary literals, resulting in an intractably large search space. Multi-Terminal BDDs  \cite{clarke1993spectral} represent multi-valued functions, but would require too many terminal nodes considering the size of a probability distribution. Variants of Edge-Value BDDs \cite{lai1992edge,sanner2005affine} focus on real valued functions. When multiple CPTs have probabilities in common, we lose the ability to distinguish from which CPT the probabilities originated. We therefore cannot determine on which variables they depend, resulting in an inconsistent model count with regard to the distribution. Our approach to maintain consistency is to represent probabilistic edges weights \emph{symbolically}. This differentiates our approach from Multi-Terminal BDDs \cite{clarke1993spectral} and Edge-Value BDDs \cite{lai1992edge}. And unlike SDDs \cite{darwiche2011sdd}, we are not obligated to view probabilities as auxiliary literals, reducing the search space to a fraction of its former size. A common characteristic with ZBDDs, is the ability to represent mutual-exclusive constraints more concisely \cite{minato1993zero}. The intuitive difference is that ZBDD optimize only the positive cofactor, while we optimize both the positive and negative cofactor of decomposition nodes, a matter we will elaborate on in upcoming discussions.

\section{Preliminaries and Background}\label{sec:background}
We provide here a description of what Bayesian networks are and introduce a running example (Section~\ref{sec:bayes}), show how to encode a BN onto the Boolean domain (Section~\ref{sec:encoding}), and describe an influential representation that will serve for comparison with ours (Section~\ref{sec:obdd}).

\subsection{Bayesian Networks}\label{sec:bayes}

Probabilistic inference is an important computational problem in Artificial Intelligence. A full joint probability distribution defined over $n$ Boolean variables is of size $\mathcal{O}(2^n)$. Finding the minimal representation of a function describing a probability distribution reduces memory and inference complexity, which is the motivation for this paper.

A Bayesian Network (BN) is a graphical representation that is used to compactly represent a joint distribution as a product of factors, by taking advantage of \emph{conditional independence} (CI). A BN is an directed acyclic graph (DAG) that models variables $X$ as nodes, the dependencies among them as edges, and their joint probability distribution as \[P(X) = P(x^1,...,x^n) = \prod^{n}_{i=1} P(x^i\ |\ pa(x^i)),\] where $P(x^i\ |\ pa(x^i))$ represents the conditional probability of variable $x^i$ given its parents $pa(x^i)$. Conditional probability tables (CPTs) are associated with edges and capture the degree to which variables are related. BNs reduce the size of representing a probability distribution to $\mathcal{O}(n2^k)$, where $k$ is the maximum number of parents of any node. 

\begin{example}\label{ex:full}
    Figure~\ref{fig:example} shows BN $\mathcal{B}$ defined over variables $X = \{a,b\}$ (Figure~\ref{subfig:bn}), its CPTs (Figure~\ref{subfig:cpts}) and corresponding full joint probability distribution (Figure~\ref{subfig:joint}).
    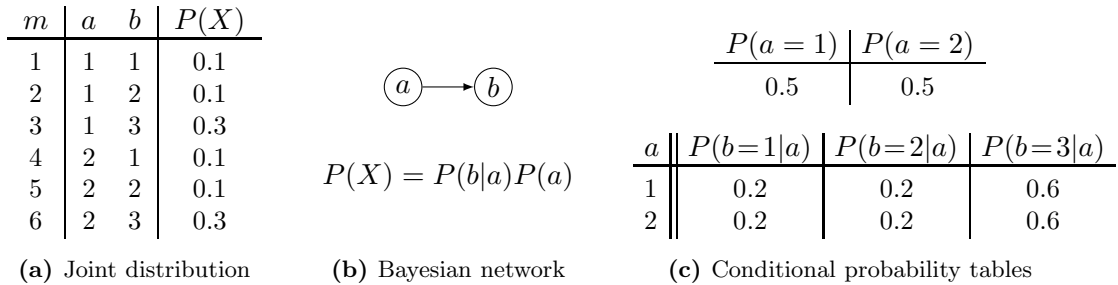
\begin{figure}[H]
    \centering
    \begin{subfigure}[b]{0.22\textwidth}
        \centering
        \begin{small}
            \begin{tabular}{c | c c | c }
                \normalsize{$m$} & \normalsize{$a$} & \normalsize{$b$} & \normalsize{$P(X)$}\\\hline
                &&&\\[-2ex]
                1  & 1 & 1 & 0.1\\
                2  & 1 & 2 & 0.1\\
                3  & 1 & 3 & 0.3\\
                4  & 2 & 1 & 0.1\\
                5  & 2 & 2 & 0.1\\
                6  & 2 & 3 & 0.3\\
            \end{tabular}
        \end{small}
        \caption{Joint distribution}
        \label{subfig:joint}
    \end{subfigure}
    \begin{subfigure}[b]{0.3\textwidth}
        \centering
        \begin{tikzpicture}[
            scale=0.2,
            every path/.style={>=latex},
            every node/.style={draw},
            inner sep=0pt,
            minimum size=0.5cm,
            line width=1pt,
            thin,
            font=\normalsize
            ]

            \node[circle] (a) at (0,0)  {$a$};
            \node[circle] (b) at (6,0)  {$b$};
            \node[draw=none] (d) at (3,-6)  {$P(X) = P(b|a)P(a)$};
            \node[draw=none] (c) at (0,-9)  {};

            \draw[->] (a) edge (b);
        \end{tikzpicture}
        \caption{Bayesian network}
        \label{subfig:bn}
    \end{subfigure}
    \begin{subfigure}[b]{0.37\textwidth}
        \begin{minipage}{\textwidth}
            \begin{minipage}{\textwidth}
                \centering
                \setlength{\tabcolsep}{4pt}
                \begin{small}
                    \begin{tabular}[t]{c | c}
                        \normalsize{$P(a=1)$} & \normalsize{$P(a=2)$} \\\hline
                                              &\\[-2ex]
                        0.5 & 0.5\\
                    \end{tabular}\vspace{1em}\\
                \end{small}
            \end{minipage}\\
            \begin{minipage}{\textwidth}
                \begin{small}
                    \setlength{\tabcolsep}{4pt}
                    \begin{tabular}[t]{l || c | c | c }
                        \normalsize{$a$} & \normalsize{$P(b\!=\!1 | a)$} & \normalsize{$P(b\!=\!2 | a)$} & \normalsize{$P(b\!=\!3 | a)$}\\\hline
                                         &&&\\[-2ex]
                        1 &  0.2 & 0.2 & 0.6\\
                        2 &  0.2 & 0.2 & 0.6\\
                    \end{tabular}
                \end{small}
            \end{minipage}
        \end{minipage}
        \caption{Conditional probability tables}
        \label{subfig:cpts}
    \end{subfigure}

    \caption{Bayesian network with local structure}
    \label{fig:example}
\end{figure}

\end{example}

Figure~\ref{subfig:bn} includes a factored form that can greatly be improved when CPTs exhibit \emph{local structure}, which comes in two forms. \emph{Context-specific independence} (CSI) is expressed when probabilities in a CPT show uniformity regardless of the value of one or more variables they have in common, with or without a certain context. \emph{Determinism} is expressed when probabilities in a CPT are equal to 0 or 1, which can be used to simplify the Boolean formula that represents it.

In order to exploit more of the problem structure than a BN, we \emph{compile} it to a target language that is more capable of doing so. The compilation process is not just about reformulating into a different language, it is also about finding the minimal representation given that language. The goal is to derive an arithmetic circuit corresponding to that representation with improved complexity compared to the standard factorization.

\subsection{Encoding Bayesian Networks}\label{sec:encoding}

In order to exploit local structure, we encode BNs into \emph{conjunctive normal form} (CNF), which is the most common representation used in satisfiability solving. It consists of a conjunction of clauses, where each clause is a disjunction of literals. A literal is a propositional Boolean variable or its negation. A Bayesian Network defined over variables $X$ can be seen as a multi-linear function $f : X \rightarrow \mathbb{R}$. Using a weighted adaptation of the \emph{sparse} or \emph{direct} encoding \cite{chavira2008probabilistic,walsh2000sat,hoos1999sat}, we encode $f$ into a Boolean function $\mathcal{E}(f) = f^{e}$ by representing it as a weighted CNF:

\begin{equation}
    \mathcal{E}(f) = f^c \land f^m,
\end{equation}
where constraint clauses $f^c$ support the mapping $\mathcal{M}(f) = f^m$ that encodes probabilities and introduces a Boolean variable for each unique variable-value pair. Details are discussed below.

\subsubsection{Encoding Constraints}

The mapping function $\mathcal{M}$ introduces for each $x \in X$ atoms $\mathcal{A}(x) = \{x_1,\dotsc,x_n\}$, where $x_i$ signifies $x$ being equal to its $i^{th}$ value. To maintain consistency among variables we add to $f^c$ an \emph{at-least-once} (ALO) constraint clause for each $x$, to ensure $x$ is assigned a value:
\begin{equation}\label{eq:variables}
    (x_1 \lor \cdots \lor x_n)
\end{equation}
As values of a variable are mutually exclusive, we add to $f^c$ the following \emph{at-most-once} (AMO) constraint clauses:

\begin{equation}\label{eq:constraints}
    \bigwedge\limits_{i = 1}^{n}\ \ \left(x_i \implies \bigwedge\limits_{x_j\in \mathcal{A}(x) \setminus \{x_i\}} \overline{x_j}\right)\ \ =\ \ \bigwedge\limits_{i = 1}^{n}\ \ \bigwedge\limits_{j = i + 1}^{n}\ \ (\overline{x_i} \lor \overline{x_j}),
\end{equation}
where $\overline{x_i}$ indicates the negation of $x_i$.
\subsubsection{Encoding CPTs}

The BN's factored form is preserved by using a weighted adaptation of the \emph{direct} encoding. The mapping function $\mathcal{M}$ adds a clause for every probability $P(x|U)$, where $x$ depends on variables $U = \{u^1,\dotsc,u^r\}$:
\begin{equation}\label{eq:probabilities}
        (x \land u^1 \land \cdots \land u^r \Rightarrow \omega_i)\ \ =\ \ (\overline{\vphantom{u^1}x} \lor \overline{\vphantom{u^1}u^1} \lor \cdots \lor \overline{\vphantom{u^1}u^r} \lor \omega_i),\\
\end{equation}
\noindent which we henceforth shall view as a \emph{weighted clause} $(\overline{\vphantom{u^1}x} \lor \overline{\vphantom{u^1}u^1} \lor \cdots \lor \overline{\vphantom{u^1}u^r}):\omega_i$, where its symbolic weight $\omega_i$ represents probability $P(x|U)$. We introduce a symbolic weight for every unique probability per CPT, and thus allow multiple models to be associated with it:

\begin{equation}\label{eq:csi}
    {\strut}I^1 \land \cdots \land I^s,\\
\end{equation}
where each $I^j$ has weight $\omega_i$ and forms the \emph{implicate} of model $m$ associated with $P(\boldsymbol{x}|\textbf{\textit{U}})$, i.e., $I^j = \omega_i:(\boldsymbol{x} \land \boldsymbol{u^{1}} \land \dotsc \land \boldsymbol{u^r})$, where the variables $\boldsymbol{x} \cup \textbf{\textit{U}}$ have been appropriately mapped by $\mathcal{M}$, i.e., $\boldsymbol{x} \in \mathcal{A}(x)$ and each $\boldsymbol{u^k} \in \mathcal{A}(u^k)$.

\begin{example}\label{ex:encoding}
Assume a BN as given in Example~\ref{ex:full}. The following clauses form $f^c$:

\begin{center}
    \begin{normalsize}
        \begin{tabular}{c |c| c}
            Variable & ALO (Eq.~\ref{eq:variables}) & AMO (Eq.~\ref{eq:constraints})\\\hline
            &\\[-2ex]
            $a$ & $(a_1 \lor a_2)$ & $(\overline{a_1} \lor \overline{a_2})$ \\
            $b$ & $(b_1 \lor b_2 \lor b_3)$ & $(\overline{b_1} \lor \overline{b_2}) \land (\overline{b_1} \lor \overline{b_3}) \land (\overline{b_2} \lor \overline{b_3})$\\
    \end{tabular}
    \end{normalsize}
\end{center}

The variables that make up the search space during compilation therefore are $\mathcal{A}(\{a,b\}) = \{a_1,a_2,b_1,b_2,b_3\}$. We encode equal probabilities $P(x|U)$ as unique symbolic weights per CPT:

\begin{figure}[H]
    \centering
    \begin{minipage}{0.15\textwidth}
        \centering
        \begin{normalsize}
            \setlength{\tabcolsep}{4pt}
            \begin{tabular}[t]{c | c}
                $P(a_1)$ & $P(a_2)$ \\\hline
                &\\[-2ex]
                $\omega_1$ & $\omega_1$\\
            \end{tabular}
        \end{normalsize}

    \end{minipage}
    \hspace{1em}
    \begin{minipage}{0.30\textwidth}
        \centering
        \begin{normalsize}
            \begin{tabular}[t]{c || c | c | c }
                \setlength{\tabcolsep}{2pt}
                $a$ & $P(b_1 | a)$ & $P(b_2 | a)$ & $P(b_3 | a)$\\\hline
                &&&\\[-2ex]
                1 & $\omega_2$ & $\omega_2$ & $\omega_3$\\
                2 & $\omega_2$ & $\omega_2$ & $\omega_3$\\
            \end{tabular}
        \end{normalsize}
    \end{minipage}
\end{figure}

\noindent In accordance with Equations~\ref{eq:probabilities} and \ref{eq:csi}, $f^m$ consists of the following clauses, accompanied by their respective symbolic weights:
\begin{center}
\begin{normalsize}
        $(\overline{\vphantom{b_3}a_1}):\omega_1\ \land
        \ (\overline{\vphantom{b_3}a_2}):\omega_2\ \land$ \\
        $(\overline{\vphantom{b_3}a_1} \lor \overline{b_1}):\omega_2\ \land
        \ (\overline{\vphantom{b_3}a_1} \lor \overline{b_2}):\omega_2\ \land
        \ (\overline{\vphantom{b_3}a_1} \lor \overline{b_1}):\omega_2\ \land$ \\
        $(\overline{\vphantom{b_3}a_1} \lor \overline{b_2}):\omega_2\ \land
        \ (\overline{\vphantom{b_3}a_1} \lor \overline{b_3}):\omega_3\ \land
        \ (\overline{\vphantom{b_3}a_2} \lor \overline{b_3}):\omega_3$.$\hphantom{\land}$
\end{normalsize}
\end{center}
\end{example}

\subsection{Ordered Binary Decision Diagrams}\label{sec:obdd}
A Boolean function $f$ defined over a set of variables $X$ is a function that maps each complete assignment of its variables to either \emph{true} (1) or \emph{false} (0). The \emph{conditioning} of $f$ on instantiated variable $\boldsymbol{x_i}$ is defined as the projection:
\begin{equation}\label{eq:projection}
    \begin{split}
        f_{|x_i \leftarrow b}(x_1,\ldots,x_n)
        = f(x_1,\ldots,x_{i-1},b,x_{i+1},\ldots,x_n),
    \end{split}
\end{equation}
with $b \in \{0,1\}$. We will use shorthand notations $f_{|x_i}$ and $f_{|\overline{x_i}}$ for $f_{|x_i \leftarrow 1}$ and $f_{|x_i \leftarrow 0}$, respectively. Shannon's theorem is used to find a more compact way to represent $f$ by \emph{factoring}~it.

\begin{restatable}{theorem}{theoremshannon}
\label{def:shannon}\cite{brown1990}
    \emph{Shannon's expansion} allows a Boolean function $f: \{0,1\}^n \rightarrow \{0,1\}$ defined over variables $X$, to be written in terms of its inputs:
    \begin{eqnarray}
        \nonumber f & = & x \land f_{|x}\ \ \lor\ \ \overline{x} \land f_{|\overline{x}},
    \end{eqnarray}
    with $x \in X$ and where $f_{|x}$ is called the positive \emph{cofactor} of $f$ with respect to $x$, and $f_{|\overline{x}}$ the negative cofactor. Applying Shannon's theorem is known as an \emph{(additive) decomposition} step. The \emph{decomposition} of $f$ is defined as the recursive application of Shannon's expansion theorem to cofactors, removing one variable at a time, until no variables are left.
\end{restatable}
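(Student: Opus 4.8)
The plan is to establish the single-step expansion identity $f = x \land f_{|x} \lor \overline{x} \land f_{|\overline{x}}$ as an equality of Boolean functions, after which the stated \emph{decomposition} needs no separate argument, being merely the iterated application of this identity. Two functions $\{0,1\}^n \rightarrow \{0,1\}$ are equal precisely when they agree on every one of the $2^n$ complete assignments, so rather than manipulating the Boolean connectives algebraically I would verify the identity pointwise, which is in effect a two-case truth-table check on the distinguished variable $x$.

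First I would fix an arbitrary complete assignment $(x_1,\ldots,x_n)$ and split on the value it gives to $x$. The key observation that makes the substitution legitimate is that, by the definition of conditioning in Equation~\eqref{eq:projection}, neither cofactor $f_{|x}$ nor $f_{|\overline{x}}$ depends on the coordinate $x$: in each case that coordinate has already been frozen to $1$ or to $0$. Hence evaluating a cofactor at the chosen assignment is well defined irrespective of what the assignment does to $x$, and I may freely read off its value from Equation~\eqref{eq:projection}.

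In the case $x = 1$ I would use $\overline{x} = 0$ together with the identity and annihilation laws $1 \land u = u$, $0 \land u = 0$, and $u \lor 0 = u$ to collapse the right-hand side to $f_{|x}$ evaluated at the assignment; by Equation~\eqref{eq:projection} this is $f$ with its $x$-coordinate set to $1$, which coincides with $f$ at the chosen point because that point already has $x = 1$. The case $x = 0$ is entirely symmetric: the right-hand side reduces to $f_{|\overline{x}}$, which again matches $f$ there. Since the two sides agree in both cases, they agree on every assignment and therefore coincide as functions.

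I do not anticipate a genuine obstacle, as the argument is a finite pointwise verification; the only point demanding care is the remark that each cofactor is independent of $x$, which is exactly what licenses reinserting the frozen value of $x$ when comparing with $f$. For the recursive \emph{decomposition} claim I would simply note that applying the identity to a cofactor strictly decreases the number of live variables, so after $n$ steps the recursion terminates at variable-free constant leaves, confirming that it is well defined.
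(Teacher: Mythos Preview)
Your argument is correct: the pointwise case split on the value of $x$, together with the observation that each cofactor no longer depends on $x$, is the standard elementary verification of the identity, and your remark on termination of the recursion is adequate for the decomposition clause.

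By contrast, the paper does not give a proof of this statement at all: its appendix entry for Theorem~\ref{def:shannon} consists solely of the line ``See \cite{brown1990}.'' So your proposal is not a different route to the same proof but rather a self-contained replacement for an external citation. What you gain is independence from the reference and an explicit justification that fits the paper's own notation (in particular your appeal to Equation~\eqref{eq:projection}); what the paper gains by citing is brevity, since Shannon's expansion is entirely classical and most readers will accept it without argument.
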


Note that all proofs of theorems, lemmas, etc., can be found in the Appendix. The \emph{Shannon decomposition} is key to one of the most influential representations in Artificial Intelligence (AI), namely \emph{Ordered Binary Decision Diagrams} (OBDD)\cite{bryant1992symbolic}.

\begin{definition}\label{def:obdd}\cite{bryant1986graph}
    A \emph{Binary Decision Diagram} (BDD) represents Boolean function $f$ defined over variables $X$ as a rooted, directed acyclic graph, where each node $v$ represents a Shannon decomposition on variable $var(v) \in X$. A BDD is \emph{ordered} (OBDD) if variables appear in the same order on all paths from the root. It is a \emph{canonical} representation if it is \emph{reduced} by applying the following rules:
    \begin{enumerate}
        \item Merge rule: All isomorphic subgraphs are merged.
        \item Delete rule: All nodes are removed whose children are isomorphic.
    \end{enumerate}
\end{definition}

One can compile the described encoding of BNs to OBDDs to perform inference by WMC. Although other languages have been used in this context, we focus more on OBDDs as it is commonly
used in comparisons with related work.

\section{Weighted Positive Binary Decision Diagrams}\label{sec:wpbdd}
Consider variable $x$ and its corresponding mapped atoms $\mathcal{A}(x) = \{x_1,x_2\}$. Decision diagrams that are based on Shannon decompositions produce an unnecessarily large arithmetic circuit by redundantly representing constraints $f^c$ provided as part of encoding $\mathcal{E}$. They also do not take advantage of the symmetric relation $x_1 = \overline{x_2}$ and $\overline{x_1} = x_2$
in the presence of ALO constraints. To ameliorate this, we propose a new canonical language called \emph{Weighted Positive Binary Decision Diagrams} (WPBDD), which are based on \emph{positive Shannon decompositions} and \emph{implicit conditioning}. We will elaborate on these concepts through an intermediate unweighted variant of WPBDDs (PBDD).

\subsection{Explicit and Implicit Conditioning}
When encoded function $f^e$ contains a \emph{unit clause} $x_i$ (a clause consisting of a single literal), it can be simplified using \emph{unit propagation}:
    \begin{enumerate}
        \item Every clause containing $x_i$ is removed, excluding the unit clause.
        \item Literal $\overline{x_i}$ is removed from every clause containing it.
    \end{enumerate}
    When conditioning $f^e$ on literal $x_i$ we are guaranteed, due to constraints $f^c$, to obtain unit clauses containing negated literals $\overline{x_j}$, with $x_j \in \mathcal{A}(x){\backslash}x_i$ (i.e., if $x$ is equal to its $i^{th}$ value, it cannot be equal to its $j^th$ value).

\begin{example}\label{ex:unitclauses} We will show by example what unit clauses are obtained by conditioning on positive literals. Consider the constraint clauses provided by Example~\ref{ex:encoding}, regarding only variable $b$:\\[-0.5em]
    \[
     \begin{normalsize}
\setlength{\tabcolsep}{1pt}
 \begin{tabular}{ >{$}c<{$}>{$}c<{$}>{$}c<{$}  >{$}c<{$}  >{$}c<{$}  >{$}c<{$}  >{$}c<{$}  >{$}c<{$}  >{$}c<{$}  >{$}c<{$}  >{$}c<{$}  >{$}c<{$} }
     f^c & = & (b_1 \lor b_2 \lor b_3) &\land &(\overline{b_1} \lor \overline{b_2}) & \land &  (\overline{b_1} \lor \overline{b_3}) &  \land  &  (\overline{b_2} \lor \overline{b_3})\\
\end{tabular}
\end{normalsize}
\]\\[-0.5em]
According to Shannon's expansion the following holds:\\[-0.5em]

\[f^c = b_1 \land f^c_{|b_1}\ \ \lor\ \ \overline{b_1} \land f^c_{|\overline{b_1}}\]\\[-1.4em]

Now specifically look at the unit clauses that result from conditioning on positive literal $b_1$, i.e., instantiating $b_1$ and performing unit propagation.\\[-2em]
\begin{normalsize}
\begin{center}
\setlength{\tabcolsep}{3pt}
\[
\begin{tabular}{ >{$}c<{$}>{$}c<{$}>{$}c<{$}  >{$}c<{$}  >{$}c<{$}  >{$}c<{$}  >{$}c<{$}  >{$}c<{$}  >{$}c<{$}  >{$}c<{$}  >{$}c<{$}  >{$}c<{$} }
     f^c_{|b_1} & = & (1 \lor b_2 \lor b_3) &\land &(0 \lor \overline{b_2}) & \land &  (0 \lor \overline{b_3}) &  \land  &  (\overline{b_2} \lor \overline{b_3})\\
      & = & (1)  & \land  & (\overline{b_2}) &  \land & (\overline{b_3}) &  \land  &  (\overline{b_2} \lor \overline{b_3})\\
      & = & (1)  & \land & (\overline{b_2})  & \land &  (\overline{b_3})  & \\
      & = & (\overline{b_2})  & \land  & (\overline{b_3}).\\
\end{tabular}
\]
\end{center}
\end{normalsize}

\noindent Thus, conditioning on $b_i$ will result in unit clauses containing negated literals $\overline{b_j}$, with $b_j \in \mathcal{A}(b)\setminus \{b_i\}$.
\end{example}

We distinguish between two types of conditioning based on the previous observation, and describe them in the following definition.
\begin{definition}\label{def:implicitconditioning}
    Let $f^e$ be an encoded representation of function $f$, given encoding $\mathcal{E}$, where $f$ is defined over variables $X$. We define $f^e_{{\parallel}x_i}$ as the conditioning of $f^e$ on literals $\{x_i,\overline{x_1},\ldots,\overline{x_{i-1}},$ $\overline{x_{i+1}},\ldots,\overline{x_n}\}$ in any order, i.e., as the \emph{explicit} conditioning of $f^e$ on literal $x_i \in \mathcal{A}(x)$, and its \emph{implicit} conditioning on literals $\overline{x_j} \in \mathcal{A}(x){\backslash}x_i$, with $x \in X$:

    \[ f^e_{{\parallel}x_i} = f^e_{|x_i,\overline{x_1},\ldots,\overline{x_{i-1}},\overline{x_{i+1}},\ldots,\overline{x_n}}, \]\\[-1em]

\end{definition}

It follows from Definition~\ref{def:implicitconditioning} and the constraints provided by encoding $\mathcal{E}$, that the relation between $f^e_{{\parallel}x_i}$ and $f^e_{|x_i}$ is given by the following equality:
\begin{equation}\label{eq:equality}
    f^e_{|x_i} = \left( \bigwedge\limits_{\overline{x_j} \in \mathcal{A}(x){\backslash}x_i}\overline{x_j} \right) \land f^e_{{\parallel}x_i}.
\end{equation}

Implicit conditioning on unit clauses takes advantage of deterministic behavior expressed in $f^c$, while other representations would have to explicitly condition these unit clauses out. The advantage is two-fold. The size of the encoding can be reduced by removing constraint clauses $f^c$ generated by Equation~\ref{eq:variables} and \ref{eq:constraints}, and integrating them directly into the compilation process through theory $\mathcal{T}$. As will be shown later, by separating constraint clauses from $f^e$ as theory $\mathcal{T}$ allows them to be conditioned in \emph{constant time}, as opposed to quadratic time. Secondly, the size of the compiled structure is reduced by not having to represent redundant constraint information with our variant on the Shannon expansion, introduced in the following section.

\subsection{Positive Shannon Decomposition}

We propose \emph{positive Shannon decompositions} that use background knowledge to improve upon Shannon decompositions by combining it with implicit conditioning.

\begin{restatable}{lemma}{theorempositiveshannon}
\label{def:positiveshannon}
    A \emph{positive Shannon expansion} allows an \emph{encoded} Boolean function $\mathcal{E}(f) = f^e$, where $f$ is defined over $X$, to be written in terms of its inputs:
    \[
            f^e =\ \ f^c\ \land\ \ \left( x_i \land f^e_{{\parallel}x_i}\ \ \lor\ \ f^e_{|\overline{x_i}} \right),
    \]
    where $f^e = f^c \land f^m$, and $x_i \in \mathcal{A}(x)$, with $x \in X$. The positive cofactor $f^e_{{\parallel}x_i}$ incorporates implicit conditioning (Definition~\ref{def:implicitconditioning}). The negative cofactor $f^e_{|\overline{x_i}}$ and the \emph{decomposition} of $f^e$ are as defined by Theorem~\ref{def:shannon}.
\end{restatable}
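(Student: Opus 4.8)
The plan is to derive the positive expansion from the ordinary Shannon expansion of Theorem~\ref{def:shannon} combined with Equation~\ref{eq:equality}, carrying out every rewrite \emph{modulo} the constraint part $f^c$. First I would apply Shannon's theorem to $f^e$ on the atom $x_i$, giving $f^e = x_i \land f^e_{|x_i}\ \lor\ \overline{x_i}\land f^e_{|\overline{x_i}}$. Because $f^e = f^c \land f^m$ entails $f^c$, we have $f^e \equiv f^c \land f^e$, so I may factor $f^c$ to the front and reduce the claim to showing that, whenever $f^c$ holds, the bracket $x_i \land f^e_{|x_i}\ \lor\ \overline{x_i}\land f^e_{|\overline{x_i}}$ agrees with $x_i \land f^e_{{\parallel}x_i}\ \lor\ f^e_{|\overline{x_i}}$.

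Next I would treat the two disjuncts separately. For the positive disjunct, substituting Equation~\ref{eq:equality} rewrites $f^e_{|x_i}$ as $\bigl(\bigwedge_{j\neq i}\overline{x_j}\bigr)\land f^e_{{\parallel}x_i}$. The at-most-once clauses of $f^c$ (Equation~\ref{eq:constraints}) force $\overline{x_j}$ for every $j \neq i$ as soon as $x_i$ is asserted, so this conjunction is redundant in the presence of $f^c \land x_i$; hence $f^c \land x_i \land f^e_{|x_i} = f^c \land x_i \land f^e_{{\parallel}x_i}$, which matches the positive disjunct of the target.

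The negative disjunct is where the real work lies, since the target drops the guard $\overline{x_i}$ that Shannon's expansion attaches to $f^e_{|\overline{x_i}}$. To justify this I would show $f^c \land x_i \land f^e_{|\overline{x_i}} = 0$: conditioning $f^c$ on $\overline{x_i}$ leaves the at-least-once clause (Equation~\ref{eq:variables}) in the reduced form $\bigvee_{k\neq i} x_k$, yet under $f^c \land x_i$ the at-most-once clauses force every $x_k$ with $k\neq i$ to false, falsifying that residual clause. Consequently the $x_i$-true part of $f^c\land f^e_{|\overline{x_i}}$ is unsatisfiable, so $f^c \land f^e_{|\overline{x_i}} = f^c \land \overline{x_i}\land f^e_{|\overline{x_i}}$ and the negative contributions of the two formulas coincide. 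Combining the two disjuncts recovers $f^e$, completing the argument.

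I expect the negative-branch step to be the main obstacle: it is the only place where the background theory $f^c$ is used essentially rather than cosmetically, and it cannot be settled by syntactic rewriting alone, since one has to reason that the at-least-once constraint becomes unsatisfiable once $x_i$ is the unique true atom. A clean alternative that sidesteps the bookkeeping is a direct semantic check over complete assignments: when $f^c$ fails both sides evaluate to $0$, and when $f^c$ holds exactly one atom of $\mathcal{A}(x)$ is true, which splits into the cases $x_i$ true and $x_i$ false and makes the vanishing of the spurious negative term immediate.
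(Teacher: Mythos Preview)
Your proposal is correct and follows essentially the same approach as the paper: both start from the ordinary Shannon expansion, invoke Equation~\ref{eq:equality} to replace $f^e_{|x_i}$ by $f^e_{{\parallel}x_i}$, and then use the AMO/ALO clauses in $f^c$ to eliminate the guard $\overline{x_i}$ on the negative cofactor. The paper carries this out as an explicit syntactic calculation in the special case $X=\{x\}$, $f^m=1$, factoring the AMO clauses for $x_i$ to the front and observing they are subsumed by $f^c$; your version is the same argument phrased semantically and in full generality, with the step $f^c\land x_i\land f^e_{|\overline{x_i}}=0$ making explicit exactly what the paper's algebraic identity relies on.
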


As intuition might confirm, logical representations will grow by the reintroduction of constraints at every expansion. We introduce a \emph{reduced} expansion by removing constraint clauses $f^c$ that introduces additional models, in turn \emph{allowing us to find more concise representations}. These models can easily be removed by a post decomposition conjoin with $f^c$.

\begin{restatable}{theorem}{theoremreducedpositiveshannon}
\label{th:reducedpositiveshannon}
    A \emph{reduced positive Shannon expansion} allows an \emph{encoded} Boolean function $\mathcal{E}(f) = f^e$, where $f$ is defined over $X$, to be written in terms of its inputs under constraints $f^c$:
    \[
            f^e\ \ \models\ \ x_i \land f^e_{{\parallel}x_i}\ \ \lor\ \ f^e_{|\overline{x_i}},
    \]
    where $f^e = f^c \land f^m$, and $x_i \in \mathcal{A}(x)$, with $x \in X$. Cofactors and decomposition are as defined by Lemma~\ref{def:positiveshannon}.
\end{restatable}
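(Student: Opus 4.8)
The plan is to obtain the entailment as an immediate weakening of the \emph{exact} positive Shannon expansion already established in Lemma~\ref{def:positiveshannon}. That lemma gives the logical equivalence
\[
f^e\ \ =\ \ f^c\ \land\ \bigl( x_i \land f^e_{{\parallel}x_i}\ \lor\ f^e_{|\overline{x_i}} \bigr),
\]
so the reduced expansion is literally the right-hand conjunct of this identity with the constraint factor $f^c$ deleted. Since deleting a conjunct can only enlarge the set of satisfying assignments, the remaining formula is entailed by $f^e$, which is exactly the claim $f^e \models x_i \land f^e_{{\parallel}x_i} \lor f^e_{|\overline{x_i}}$.

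Concretely, I would argue at the level of truth assignments. Let $\mu$ be any complete assignment over $\mathcal{A}(X)$ with $\mu \models f^e$. By the equivalence of Lemma~\ref{def:positiveshannon}, $\mu$ satisfies the conjunction $f^c \land (x_i \land f^e_{{\parallel}x_i} \lor f^e_{|\overline{x_i}})$, and a satisfying assignment of a conjunction satisfies each conjunct; in particular $\mu \models x_i \land f^e_{{\parallel}x_i} \lor f^e_{|\overline{x_i}}$. As $\mu$ was arbitrary, every model of $f^e$ is a model of the reduced expansion, establishing the entailment. Equivalently, this is the propositional fact $A \land B \models B$ instantiated with $A = f^c$ and $B = x_i \land f^e_{{\parallel}x_i} \lor f^e_{|\overline{x_i}}$.

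There is little genuine difficulty here once Lemma~\ref{def:positiveshannon} is in hand; the only point requiring care is to read $\models$ as one-directional entailment rather than equivalence, and to note that the converse fails. The reduced expansion admits extra models, precisely those assignments that set $x_i$ together with some $x_j$ (with $j \neq i$) true, violating the at-most-once clauses of $f^c$ in Equation~\ref{eq:constraints}, so $\models$ cannot be strengthened to $=$. These spurious models are exactly the ones the paper proposes to discard by a post-decomposition conjoin with $f^c$, so the one-directional statement is the strongest one available. Should a self-contained derivation be preferred instead of invoking Lemma~\ref{def:positiveshannon}, I would start from ordinary Shannon expansion (Theorem~\ref{def:shannon}) and substitute Equation~\ref{eq:equality} for $f^e_{|x_i}$; the delicate step then becomes the positive branch, where the at-most-once constraints force $\overline{x_j}$ for all $j \neq i$ whenever $x_i$ holds, letting the explicit unit literals be absorbed into the implicit conditioning of $f^e_{{\parallel}x_i}$.
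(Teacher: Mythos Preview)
Your argument is correct. Invoking Lemma~\ref{def:positiveshannon} and then dropping the conjunct $f^c$ via the monotonicity $A\land B\models B$ is a perfectly valid and very clean way to obtain the one-directional entailment, and your remark that the converse fails because the reduced expansion admits assignments with $x_i$ and some $x_j$ ($j\neq i$) simultaneously true is also on point.

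The paper, however, takes a different and more laborious route: rather than appealing to Lemma~\ref{def:positiveshannon} as a black box, it works directly with the concrete constraint clauses $f^c$ (taking $f^m=1$), explicitly enumerates the models of $f^e$ versus those of the reduced expansion $f^w$, counts them ($n$ versus $(n-1)+2^{n-1}$), and identifies the surplus models as exactly those removed by the at-most-once clauses for $x_i$. It then goes further and unrolls the full decomposition to show that $f^c$ can be factored out of the iterated reduced expansion. What the paper's approach buys is a precise characterisation of the model gap and an explicit justification of the ``post-decomposition conjoin with $f^c$'' remedy; what your approach buys is brevity and generality, since it never needs the particular form of $f^c$ at all. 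For the bare entailment stated in the theorem, your one-line argument is entirely sufficient.
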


By using the reduced form of the expansion, we are able represent constraints more concisely in corresponding logical circuits (Figure~\ref{fig:positiveshannon}), as well as in the soon to be introduced representation that utilizes it.

\begin{figure}[H]
     \centering
     \newcommand{\sepfig}{0.15}
     \begin{subfigure}[t]{\sepfig\textwidth}
         \centering
         \begin{tikzpicture}[
                 scale=0.3,
                 every path/.style={>=latex},
                 every node/.style={draw,circle},
                 inner sep=0pt,
                 minimum size=0.5cm,
                 line width=1pt,
                 thin,
                 font=\normalsize
                 ]

                 \node[draw=none] (or1) at (0,0) {$\vee$};
                 \node[draw=none] (and1) at (-3,-2)  {$\wedge$};
                 \node[draw=none] (and2) at (3,-2)  {$\wedge$};
                 \node[draw=none] (a1) at (-4.5,-5)  {$x_i$};
                 \node[draw=none] (a2) at (1.5,-5)  {$\overline{x_i}$};
                 \node[draw=none] (alpha) at (-1.5,-5)     {$f_{{\mid}x_i}$};
                 \node[draw=none] (beta) at (4.5,-5)     {$f_{{\mid}\overline{x_i}}$};

                 \draw[-] (or1) edge (and1);
                 \draw[-] (or1) edge (and2);
                 \draw[-] (and1) edge (a1);
                 \draw[-] (and2) edge (a2);
                 \draw[-] (and1) edge (alpha);
                 \draw[-] (and2) edge (beta);
         \end{tikzpicture}
        \captionsetup{width=1.5\textwidth}
         \caption{Shannon Logical Circuit}
         \label{fig:positivea}
    \end{subfigure}
    \hspace{4em}
    \begin{subfigure}[t]{\sepfig\textwidth}
        \centering
        \begin{tikzpicture}[
                scale=0.3,
                every path/.style={>=latex},
                every node/.style={draw,circle},
                inner sep=0pt,
                minimum size=0.5cm,
                line width=1pt,
                thin,
                font=\normalsize
            ]

            \node[draw=none] (or1) at (0,0) {$\vee$};
            \node[draw=none] (and1) at (-3,-2)  {$\wedge$};
            \node[draw=none] (f) at (3,-2)  {$f_{|\overline{x_i}}$};
            \node[draw=none] (a1) at (-4.5,-5)  {$x_i$};
            \node[draw=none] (alpha) at (-1.5,-5) {$f_{{\parallel}x_i}$};

            \draw[-] (or1) edge (and1);
            \draw[-] (or1) edge (f);
            \draw[-] (and1) edge (a1);
            \draw[-] (and1) edge (alpha);
        \end{tikzpicture}
        \captionsetup{width=1.5\textwidth}
        \caption{Positive Shannon Logical Circuit}
        \label{fig:positiveb}
    \end{subfigure}
    \caption{Logical Circuits}
    \label{fig:positiveshannon}
 \end{figure}
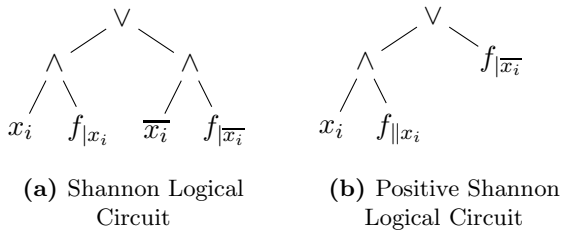

Using encoding $\mathcal{E}$ we can infer that the delete rule used to reduce OBDDs will never be applied to constraint clauses $f^c$. OBDDs are therefore not capable of capturing local structure along one dimension. To ameliorate this, we introduce \emph{positive} OBDDs (PBDD) as an unweighted intermediate representation, that are based on the positive Shannon decomposition and substitutes the delete rule with the \emph{collapse} rule, which as opposed to deleting literals, applies the \emph{distributive law} to involved literals in the induced logical circuit.

\begin{definition}\label{def:pbdd}
    A \emph{positive} OBDD (PBDD) represents Boolean function $\mathcal{E}(f) = f^e$, where $f$ is defined over variables $X$, as an \emph{ordered} BDD where each node $v$ represents a positive Shannon decomposition on variable $var(v) \in \mathcal{A}(X)$. It is a \emph{canonical} representation if \emph{reduced} by applying the following rules:
    \begin{enumerate}
        \item Merge rule: All isomorphic subgraphs are merged.
        \item Collapse rule: remove direct descendant $u$ of node $v$ iff $f_{{\parallel}x_i} = f_{{\parallel}x_j}$, where $var(v) = x_i$ and $var(u) = x_j$, with $x_i,x_j \in \mathcal{A}(x)$ and $x \in X$.
    \end{enumerate}
\end{definition}
A function \emph{essentially} depends on a variable if it appears in its prime implicate. The variable set $\mathcal{S}$, on which $f^e$ essentially depends, is called the \emph{support} of $f^e$. We will use this support set to identify to what variables the collapsed rule has been applied in order to produce its corresponding arithmetic circuit, a trick similarly utilized with Zero-Suppressed BDDs (ZBDD) \cite{minato1993zero}. Note that a Boolean function $\mathcal{E}(f) = f^e$, where $f$ is defined over variables $X$, essentially depends on $\mathcal{A}(X)$, because $f^c$ is a prime implicate that mentions all $\mathcal{A}(X)$. The canonical property of PBDDs follows from the fact that a binary tree can be reconstructed from a PBDD and its support set, by apply its reduction rules reversely.

Figure~\ref{fig:examplepositiveshannon} shows the difference in representational size between an OBDD, a ZBDD and a PBDD representing the same function with constraints on $\mathcal{A}(x) = \{x_1,x_2\}$, where $g$ is a Boolean function that does not essentially depend on literals $\{x_1,x_2\}$. The positive Shannon decomposition does not only reduce the size the corresponding logical circuit, it also reduces the size of the representation. More generally, OBDDs require an exponential number of nodes in the product of each constraint variable's dimension, where ZBDDs require a linear number, and where PBDDs require only 1 node.

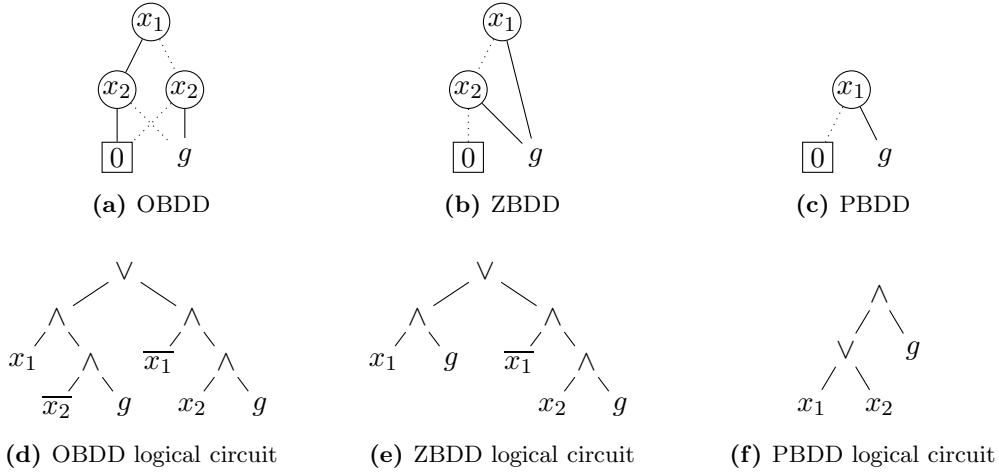
\begin{figure}[H]
    \centering
    \begin{subfigure}[t]{0.3\textwidth}
        \centering
        \begin{tikzpicture}[
                scale=0.3,
                every path/.style={>=latex},
                every node/.style={draw,circle},
                inner sep=0pt,
                minimum size=0.5cm,
                line width=1pt,
                thin,
                font=\normalsize
                ]

                \node[] (a0) at (0,0)  {$x_1$};
                \node[] (a01) at (-1.5,-3)     {$x_2$};
                \node[] (a11) at (1.5,-3)     {$x_2$};
                \node[rectangle,minimum size=0.4cm] (false) at (-1.5,-6)   {$0$};
                \node[draw=none] (true) at (1.5,-6)   {$g$};

                \draw[] (a0) edge (a01);
                \draw[dotted] (a0) edge (a11);
                \draw[dotted] (a01) edge (true);
                \draw[] (a01) edge (false);
                \draw[] (a11) edge (true);
                \draw[dotted] (a11) edge (false);

        \end{tikzpicture}
        \caption{OBDD}
    \end{subfigure}%
    \begin{subfigure}[t]{0.3\textwidth}
        \centering
        \begin{tikzpicture}[
                scale=0.3,
                every path/.style={>=latex},
                every node/.style={draw,circle},
                inner sep=0pt,
                minimum size=0.5cm,
                line width=1pt,
                thin,
                font=\normalsize
                ]

                \node[] (a0) at (0,0)  {$x_1$};
                \node[] (a01) at (-1.5,-3)     {$x_2$};
                \node[rectangle,minimum size=0.4cm] (false) at (-1.5,-6)   {$0$};
                \node[draw=none] (true) at (1.5,-6)   {$g$};

                \draw[dotted] (a0) edge (a01);
                \draw[] (a0) edge (true);
                \draw[] (a01) edge (true);
                \draw[dotted] (a01) edge (false);

        \end{tikzpicture}
        \caption{ZBDD}
    \end{subfigure}%
    \begin{subfigure}[t]{0.3\textwidth}
        \centering
        \begin{tikzpicture}[
                scale=0.3,
                every path/.style={>=latex},
                every node/.style={draw,circle},
                inner sep=0pt,
                minimum size=0.5cm,
                line width=1pt,
                thin,
                font=\normalsize
                ]

                \node[] (a0) at (0,-3)  {$x_1$};
                \node[rectangle,minimum size=0.4cm] (false) at (-1.5,-6)   {$0$};
                \node[draw=none] (true) at (1.5,-6)   {$g$};

                \draw[] (a0) edge (true);
                \draw[dotted] (a0) edge (false);

        \end{tikzpicture}
        \caption{PBDD}
    \end{subfigure}\\
    \vspace{1em}
    \begin{subfigure}[t]{0.3\textwidth}
        \centering
        \begin{tikzpicture}[
                scale=0.3,
                every path/.style={>=latex},
                every node/.style={draw,circle},
                inner sep=0pt,
                minimum size=0.5cm,
                line width=1pt,
                thin,
                font=\normalsize
                ]

                \node[draw=none] (or1) at (0,0) {$\vee$};
                \node[draw=none] (and1) at (-3,-2)  {$\wedge$};
                \node[draw=none] (and2) at (3,-2)  {$\wedge$};

                \node[draw=none] (a0) at (-4.5,-4)  {$x_1$};
                \node[draw=none] (and3) at (-1.5,-4)     {$\land$};

                \node[draw=none] (na0) at (1.5,-4)  {$\overline{x_1}$};
                \node[draw=none] (and4) at (4.5,-4)     {$\land$};

                \node[draw=none] (na1) at (-3,-6)     {$\overline{x_2}$};
                \node[draw=none] (omega0) at (0,-6)     {$g$};

                \node[draw=none] (a1) at (3,-6)     {$x_2$};
                \node[draw=none] (omega1) at (6,-6)     {$g$};

                \draw[-] (or1) edge (and1);
                \draw[-] (or1) edge (and2);
                \draw[-] (and1) edge (a0);
                \draw[-] (and2) edge (na0);
                \draw[-] (and1) edge (and3);
                \draw[-] (and2) edge (and4);

                \draw[-] (and3) edge (na1);
                \draw[-] (and3) edge (omega0);
                \draw[-] (and4) edge (a1);
                \draw[-] (and4) edge (omega1);

        \end{tikzpicture}
        \caption{OBDD logical circuit}
        \label{fig:examplepositivea}
    \end{subfigure}
    \begin{subfigure}[t]{0.3\textwidth}
        \centering
        \begin{tikzpicture}[
                scale=0.3,
                every path/.style={>=latex},
                every node/.style={draw,circle},
                inner sep=0pt,
                minimum size=0.5cm,
                line width=1pt,
                thin,
                font=\normalsize
                ]

                \node[draw=none] (or1) at (0,0) {$\vee$};
                \node[draw=none] (and1) at (-3,-2)  {$\wedge$};
                \node[draw=none] (and2) at (3,-2)  {$\wedge$};

                \node[draw=none] (a0) at (-4.5,-4)  {$x_1$};
                \node[draw=none] (and3) at (-1.5,-4)     {$g$};

                \node[draw=none] (na0) at (1.5,-4)  {$\overline{x_1}$};
                \node[draw=none] (and4) at (4.5,-4)     {$\land$};

                \node[draw=none] (a1) at (3,-6)     {$x_2$};
                \node[draw=none] (omega1) at (6,-6)     {$g$};

                \draw[-] (or1) edge (and1);
                \draw[-] (or1) edge (and2);
                \draw[-] (and1) edge (a0);
                \draw[-] (and2) edge (na0);
                \draw[-] (and1) edge (and3);
                \draw[-] (and2) edge (and4);

                \draw[-] (and4) edge (a1);
                \draw[-] (and4) edge (omega1);

        \end{tikzpicture}
        \caption{ZBDD logical circuit}
        \label{fig:examplepositivea}
    \end{subfigure}
    \begin{subfigure}[t]{0.3\textwidth}
        \centering
        \begin{tikzpicture}[
                scale=0.3,
                every path/.style={>=latex},
                every node/.style={draw,circle},
                inner sep=0pt,
                minimum size=0.5cm,
                line width=1pt,
                thin,
                font=\normalsize
                ]

                \node[draw=none] (or1) at (0,0) {$\wedge$};
                \node[draw=none] (and1) at (-1.5,-2.5)  {$\vee$};
                \node[draw=none] (and2) at (1.5,-2.5)  {$g$};
                \node[draw=none] (a1) at (-3,-5)  {$x_1$};
                \node[draw=none] (alpha) at (0,-5)     {$x_2$};

                \draw[-] (or1) edge (and1);
                \draw[-] (or1) edge (and2);
                \draw[-] (and1) edge (a1);
                \draw[-] (and1) edge (alpha);
        \end{tikzpicture}
         \caption{PBDD logical circuit}
        \label{fig:examplepositiveb}
    \end{subfigure}
    \caption{From BDDs to logical circuits}
    \label{fig:examplepositiveshannon}
\end{figure}

Figure~\ref{fig:examplepositiveshannon} shows that there are functions where the corresponding minimal PBDD and OBDD differ exponentially in size. Not represented in the figure, the collapse rule additionally removes the nodes that share the same positive cofactor with their parents (Figure~\ref{fig:collapserule}).

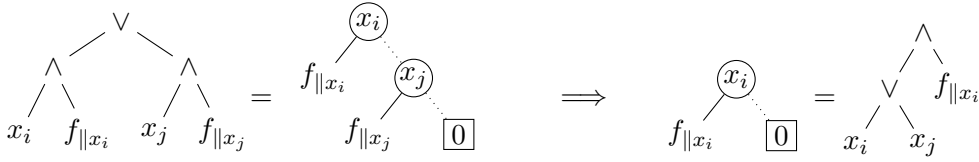
\begin{figure}[H]
    \centering
     \begin{subfigure}[t]{0.16\textwidth}
        \centering
        \begin{tikzpicture}[
                scale=0.3,
                every path/.style={>=latex},
                every node/.style={draw,circle},
                inner sep=0pt,
                minimum size=0.5cm,
                line width=1pt,
                thin,
                font=\normalsize
                ]

                \node[draw=none] (or1) at (0,0) {$\vee$};
                \node[draw=none] (and1) at (-3,-2)  {$\wedge$};
                \node[draw=none] (and2) at (3,-2)  {$\wedge$};
                \node[draw=none] (a1) at (-4.5,-5)  {$x_i$};
                \node[draw=none] (a2) at (1.5,-5)  {$x_j$};
                \node[draw=none] (alpha) at (-1.5,-5)     {$f_{{\parallel}x_i}$};
                \node[draw=none] (beta) at (4.5,-5)     {$f_{{\parallel}x_j}$};

                \draw[-] (or1) edge (and1);
                \draw[-] (or1) edge (and2);
                \draw[-] (and1) edge (a1);
                \draw[-] (and2) edge (a2);
                \draw[-] (and1) edge (alpha);
                \draw[-] (and2) edge (beta);
        \end{tikzpicture}
    \end{subfigure}
    \begin{subfigure}[t]{0.05\textwidth}
        \centering
        \begin{tikzpicture}[
                scale=0.3,
                every path/.style={>=latex},
                every node/.style={draw,circle},
                inner sep=0pt,
                minimum size=0.5cm,
                line width=1pt,
                thin,
                font=\normalsize
                ]

                \node[draw=none] () at (0,0)   {};
                \node[draw=none] (a0) at (2,2)  {$=$};

        \end{tikzpicture}
    \end{subfigure}%
    \begin{subfigure}[t]{0.22\textwidth}
        \centering
        \begin{tikzpicture}[
                scale=0.3,
                every path/.style={>=latex},
                every node/.style={draw,circle},
                inner sep=0pt,
                minimum size=0.5cm,
                line width=1pt,
                thin,
                font=\normalsize
                ]

                \node[] (x1) at (0,0)  {$x_i$};
                \node[draw=none] (fx1) at (-2,-2.5)     {};
                \node[draw=none] (fx11) at (-2,-2.5)     {$f_{{\parallel}x_i}$};
                \node[] (x2) at (2,-2.5)     {$x_j$};
                \node[draw=none] (fx2) at (0,-5)     {};
                \node[draw=none] (fx22) at (0,-5)     {$f_{{\parallel}x_j}$};
                \node[rectangle,minimum size=0.4cm] (false) at (4,-5)   {$0$};

                \draw[] (x1) edge (fx1);
                \draw[dotted] (x1) edge (x2);
                \draw[dotted] (x2) edge (false);
                \draw[] (x2) edge (fx2);

        \end{tikzpicture}
    \end{subfigure}%
    \begin{subfigure}[t]{0.12\textwidth}
        \centering
        \begin{tikzpicture}[
                scale=0.3,
                every path/.style={>=latex},
                every node/.style={draw,circle},
                inner sep=0pt,
                minimum size=0.5cm,
                line width=1pt,
                thin,
                font=\normalsize
                ]

                \node[draw=none] () at (0,0)   {};
                \node[draw=none] (a0) at (0,2)  {$\Longrightarrow$};

        \end{tikzpicture}
    \end{subfigure}%
    \begin{subfigure}[t]{0.13\textwidth}
        \centering
        \begin{tikzpicture}[
                scale=0.3,
                every path/.style={>=latex},
                every node/.style={draw,circle},
                inner sep=0pt,
                minimum size=0.5cm,
                line width=1pt,
                thin,
                font=\normalsize
                ]

                \node[] (x1) at (0,0)  {$x_i$};
                \node[draw=none] (fx1) at (-2,-2.5)     {};
                \node[draw=none] (fx11) at (-2,-2.5)     {$f_{{\parallel}x_i}$};
                \node[rectangle,minimum size=0.4cm] (false) at (2,-2.5)   {$0$};

                \draw[] (x1) edge (fx1);
                \draw[dotted] (x1) edge (false);

        \end{tikzpicture}
    \end{subfigure}%
    \begin{subfigure}[t]{0.02\textwidth}
        \centering
        \begin{tikzpicture}[
                scale=0.3,
                every path/.style={>=latex},
                every node/.style={draw,circle},
                inner sep=0pt,
                minimum size=0.5cm,
                line width=1pt,
                thin,
                font=\normalsize
                ]

                \node[draw=none] () at (0,0)   {};
                \node[draw=none] (a0) at (0,2)  {$=$};

        \end{tikzpicture}
    \end{subfigure}%
    \begin{subfigure}[t]{0.14\textwidth}
        \centering
        \begin{tikzpicture}[
                scale=0.3,
                every path/.style={>=latex},
                every node/.style={draw,circle},
                inner sep=0pt,
                minimum size=0.5cm,
                line width=1pt,
                thin,
                font=\normalsize
                ]

                \node[draw=none] (or1) at (0,0) {$\wedge$};
                \node[draw=none] (and1) at (-1.5,-2.5)  {$\vee$};
                \node[draw=none] (and2) at (1.5,-2.5)  {$f_{{\parallel}x_i}$};
                \node[draw=none] (a1) at (-3,-5)  {$x_i$};
                \node[draw=none] (alpha) at (0,-5)     {$x_j$};

                \draw[-] (or1) edge (and1);
                \draw[-] (or1) edge (and2);
                \draw[-] (and1) edge (a1);
                \draw[-] (and1) edge (alpha);
        \end{tikzpicture}
    \end{subfigure}
    \caption{Application of the collapse rule, where cofactors are equal}
    \label{fig:collapserule}
\end{figure}

The semantics of nodes whose child has been removed changes. A missing node, inferable by the support set $\mathcal{S}$ and variable order, indicates the application of the collapse rule, i.e., the distributive law on $x_i$ and $x_j$. There is no ambiguity regarding the delete rule as it can never be applied on $\mathcal{A}(X)$ due to constraint clauses $f^c$. Note that the delete rule can however be applied in case one optimizes Boolean variables of the BN by representing them with only one literal in $f^e$, as opposed to two. This will delete literals from the induced circuit and result in an inconsistent model count with regard to the probability distribution. It is precisely for this reason why the collapse rule uses the distributive law on involved literals to simplify the induced circuit, as apposed to deleting them from it. The combination of the merge and collapse rule allow for more fine grained control in exploiting CSI, because it allows independence given a subset of the values to be expressed more efficiently when dealing with multi-valued variables.

\begin{restatable}{proposition}{theoremequivalent}
\label{th:equivalent}
    An OBDD representing Boolean function $f$ and an PBDD representing $\mathcal{E}(f)$ induce isomorphic logical circuits under Boolean identity, given an appropriate ordering.
\end{restatable}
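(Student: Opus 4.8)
The plan is to prove the claim by a structural induction over the decomposition that matches the two diagrams node-for-node under a common ordering, and then to rewrite the circuit fragment contributed by each OBDD node into the fragment contributed by the corresponding PBDD node using only Boolean identities that are valid in the presence of the constraints $f^c$.

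First I would fix the \emph{appropriate ordering}. Since both diagrams range over the atoms $\mathcal{A}(X)$ and represent the same underlying function $f^e = \mathcal{E}(f)$, I take one and the same total order on $\mathcal{A}(X)$ for both, chosen so that the atoms of each variable $x \in X$ are contiguous; this is exactly the ordering for which the positive Shannon decomposition of Lemma~\ref{def:positiveshannon} is defined at every node. The induced logical circuit of a diagram is obtained by replacing each decomposition node on atom $x_i$ by its gadget: an OBDD node yields $(x_i \land A) \lor (\overline{x_i} \land B)$ (Figure~\ref{fig:positivea}) and the corresponding PBDD node yields $(x_i \land A') \lor B'$ (Figure~\ref{fig:positiveb}), where $A, A'$ and $B, B'$ are the circuits induced by the sub-diagrams rooted at the positive and negative children, respectively.

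The base case is the set of sink nodes ($0$ and the leaves), which are identical in the two representations. For the inductive step I would rewrite the OBDD gadget as follows. On the positive branch, Equation~\ref{eq:equality} expresses $f^e_{|x_i}$ as the conjunction of the implicit literals $\overline{x_j}$, for $x_j \in \mathcal{A}(x)\setminus\{x_i\}$, with the positive cofactor $f^e_{{\parallel}x_i}$; the AMO clauses $x_i \implies \overline{x_j}$ of $f^c$ then let us absorb these literals, so that $x_i \land A \equiv x_i \land A'$ under $f^c$. On the negative branch, the ALO clause lets the literal $\overline{x_i}$ be absorbed by the atom $x_j$ ($j \neq i$) that is necessarily asserted further down the $f^e_{|\overline{x_i}}$ sub-diagram, so that $\overline{x_i} \land B \equiv B'$. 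Combining the two branches reproduces the PBDD gadget $(x_i \land A') \lor B'$, and where two consecutive positive cofactors coincide the collapse rule contracts the chain by the distributive law (Figure~\ref{fig:collapserule}), which is itself a Boolean identity. Since the merge rule is common to both languages, and the delete rule is never triggered on $\mathcal{A}(X)$ because $f^c$ is a prime implicate mentioning every atom so that no atom's two cofactors are isomorphic, the reduction steps are in bijection and the rewriting propagates unchanged up to the root.

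I expect the main obstacle to lie in the negative-branch absorption for multi-valued variables: a single semantic decision on $x$ is realized as a chain of nodes over $x_1, \ldots, x_n$, along which the positive cofactors $f^e_{{\parallel}x_i}$ silently carry the implicit literals $\overline{x_1}, \ldots, \overline{x_{i-1}}, \overline{x_{i+1}}, \ldots, \overline{x_n}$. I must show that the telescoping of this implicit conditioning down the chain reassembles, under $f^c$, into exactly the nested Shannon fragment that the OBDD produces explicitly, with no constraint literal lost or double-counted; once this bookkeeping is settled, the node-level equivalence and the distributive-law contractions assemble into the desired circuit isomorphism.
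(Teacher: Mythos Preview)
You have misread the statement. The OBDD in Proposition~\ref{th:equivalent} represents the \emph{original} Boolean function $f$ over the variables $X$, not $\mathcal{E}(f)$ over the atoms $\mathcal{A}(X)$. Your opening sentence, ``both diagrams range over the atoms $\mathcal{A}(X)$ and represent the same underlying function $f^e = \mathcal{E}(f)$'', is therefore a false premise, and the rest of the argument is built on it. In particular, the OBDD side contains no constraint clauses to absorb, so the machinery you invoke (Equation~\ref{eq:equality}, AMO/ALO absorption on the positive and negative branches) never arises. Your claim that ``the delete rule is never triggered on $\mathcal{A}(X)$'' is also misplaced here: the OBDD is over $X$, not $\mathcal{A}(X)$, and its delete rule certainly can fire on $f$. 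Finally, your anticipated obstacle for multi-valued variables is a non-issue, because the hypothesis that $f$ is Boolean forces $|\mathcal{A}(x)|=2$ for every $x$.

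The paper's proof is correspondingly much shorter. Since each $x\in X$ has exactly two atoms $\{x_1,x_2\}$, one takes the bijection $x\mapsto x_1$, $\overline{x}\mapsto x_2$ and an ordering in which each pair $x_1,x_2$ is adjacent. A Shannon node on $x$ with gadget $(x\land f_{|x})\lor(\overline{x}\land f_{|\overline{x}})$ then maps to the two-atom positive-Shannon chain on $\mathcal{A}(x)$, whose gadget is $(x_1\land f^e_{\parallel x_1})\lor(x_2\land f^e_{\parallel x_2})$; these are isomorphic under the bijection. The reduction rules line up as well: whenever the OBDD delete rule fires on $x$ (cofactors equal), the PBDD collapse rule fires on $x_1,x_2$. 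The only discrepancy is that the delete rule continues with the identity $(x\lor\overline{x})\land g = g$ after distributing, whereas the collapse rule stops at $(x_1\lor x_2)\land g$; this is precisely the ``under Boolean identity'' qualifier in the statement. What you have sketched is closer in spirit to Proposition~\ref{th:equivalenttwo}, where both diagrams do live over $\mathcal{A}(X)$.
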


\begin{restatable}{proposition}{theoremequivalenttwo}
\label{th:equivalenttwo}
    Given an ordering on $\mathcal{A}(X)$, the size of PBDD~$\varphi$ is less than the size of OBDD~$\psi$ when both representing $\mathcal{E}(f) = f^e$, where $f$ is defined over variables $X$.
\end{restatable}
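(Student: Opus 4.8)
The plan is to compare the two diagrams node by node, by building an injection $\iota$ from the nodes of the PBDD $\varphi$ into the nodes of the OBDD $\psi$ --- both compiled for the same $f^e$ under the common ordering on $\mathcal{A}(X)$ --- and then exhibiting at least one OBDD node lying outside its image. Since both $\varphi$ (Definition~\ref{def:pbdd}) and $\psi$ (Definition~\ref{def:obdd}) are canonical, each node on either side is uniquely identified with the cofactor of $f^e$ that it represents, so counting nodes amounts to counting distinct cofactors. The essential bridge between the two bookkeeping schemes is Equation~\ref{eq:equality}, which expresses the ordinary cofactor $f^e_{|x_i}$ stored by $\psi$ as the conjunction of the forced negative literals $\overline{x_j}$ with the positive cofactor $f^e_{{\parallel}x_i}$ stored by $\varphi$.

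First I would define $\iota$ by sending a PBDD node, which carries a positive Shannon decomposition of some cofactor $g$ of $f^e$, to the unique OBDD node carrying the same $g$. Well-definedness reduces to showing every cofactor realized in $\varphi$ is also realized in $\psi$: the negative cofactors $f^e_{|\overline{x_i}}$ are literally cofactors of $\psi$, and by Equation~\ref{eq:equality} each positive cofactor $f^e_{{\parallel}x_i}$ is exactly the function reached in $\psi$ once the true branch of $x_i$ has been followed and the remaining atoms of the same variable have been forced false by the implicit conditioning of Definition~\ref{def:implicitconditioning}. Canonicity on both sides then makes $\iota$ injective, because distinct nodes carry distinct cofactors and hence distinct images. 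This already yields $|\varphi| \le |\psi|$.

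To upgrade this to strict inequality I would locate OBDD nodes that $\iota$ never reaches. Whenever a value $x_i$ of a variable $x$ is taken positively, $\psi$ must still decide the remaining atoms of $x$, each routing its true branch to the $0$-sink; the functions sitting at these nodes are partial at-most-once conjunctions such as $\overline{x_{i+1}} \land \cdots \land \overline{x_d} \land f^e_{{\parallel}x_i}$. As noted in the remark preceding the proposition, the delete rule can never simplify such constraint nodes, since $f^c$ is a prime implicate mentioning every atom and the represented function genuinely depends on the atom being decided. In $\varphi$, by contrast, implicit conditioning and the collapse rule discharge all of these atoms in a single step, so $\varphi$ never stores a partial conjunction of this shape; consequently no such OBDD node is an $\iota$-image. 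Because every variable carries at least two atoms and $f^c$ is always present in $\mathcal{E}(f)$, at least one such node exists, and $|\varphi| < |\psi|$ follows.

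The step I expect to be the main obstacle is exactly this strictness claim, specifically ensuring that a constraint node of $\psi$ is genuinely outside the image rather than merged --- by the merge rule --- onto a node that $\iota$ does hit. I would settle it by separating the two classes of functions: a PBDD-stored function has, for every variable, either no atom committed or an entire value committed via implicit conditioning, whereas a surviving constraint node of $\psi$ carries a function of the form ``all remaining atoms of $x$ false,'' which essentially depends on those not-yet-discharged atoms and therefore cannot coincide with any PBDD cofactor. Making this dependence argument watertight --- and checking it survives orderings in which the atoms of a variable are interleaved with atoms of others, where the forced literals appear at scattered depths rather than in one contiguous chain --- is the delicate part of the proof.
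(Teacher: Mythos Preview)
Your injection argument is a cleaner abstraction than the paper's approach, which works directly at the level of the induced logical circuit: the paper exhibits the explicit sub-circuit that $\psi$ must carry for the constraint clauses of each variable (Figure~\ref{fig:inducedobdd}), marks the operators that the positive expansion and implicit conditioning delete, and then tallies a quantitative lower bound on the number of removed nodes. So the paper's proof is constructive and counts savings, whereas yours is a cofactor-to-cofactor embedding. Both are legitimate strategies, but your version has a gap that is more serious than the one you flag.

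The problem is not in the strictness step but already in the well-definedness of $\iota$. You assert that every PBDD cofactor is an OBDD cofactor because ``each positive cofactor $f^e_{{\parallel}x_i}$ is exactly the function reached in $\psi$ once the true branch of $x_i$ has been followed and the remaining atoms of the same variable have been forced false.'' This is only true when the atoms $\mathcal{A}(x)$ are contiguous in the ordering. If some atom $y_k\in\mathcal{A}(y)$ with $y\neq x$ sits between $x_i$ and a later $x_j$, then in $\psi$ you cannot reach $f^e_{{\parallel}x_i}$ at all: the OBDD nodes are exactly the restrictions of $f^e$ to \emph{prefixes} of the ordering, so after taking the true branch on $x_i$ you are at $f^e_{|x_i}=\bigl(\bigwedge_{j}\overline{x_j}\bigr)\wedge f^e_{{\parallel}x_i}$, and you must next decide $y_k$ before you ever touch $x_j$. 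The function $f^e_{{\parallel}x_i}$ itself---with $y_k$ still free but $x_j$ already eliminated---never appears as a node of $\psi$. Consequently the map ``send the PBDD node for $g$ to the OBDD node for $g$'' has no target in general.

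The repair is to send the PBDD node for $g$ (decomposing on atom $a$) to the OBDD node for $g\wedge\bigwedge_{x_j\in P}\overline{x_j}$, where $P$ is the set of ``pending'' atoms: those $x_j\geq a$ in the ordering that were already discharged in $\varphi$ by implicit conditioning along the path. One then has to argue that $P$ is determined by $g$ alone (so the map is well-defined after the merge rule), that the target really is a node of $\psi$, and that the map is injective---none of which is automatic. Your closing paragraph anticipates interleaving trouble only for the strictness claim; in fact it already undermines the basic inequality $|\varphi|\le|\psi|$. The paper sidesteps all of this by working with adjacent atoms and arguing at the circuit level, remarking (without detailed proof) that the savings only grow in the interleaved case.
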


\subsection{Adding Probabilities as Weights}
Encoding $\mathcal{E}$ represents a BN as a weighted propositional formula. We extend PBDDs to \emph{weighted} PBDDs (WPBDD) using an intuitive scheme, taking advantage of the fact that probabilities are fully implied by the variables in the BN. Traditionally, an empty clause would result in a contradiction, i.e., the instantiation is unsatisfiable. We \emph{implicitly} assign weight $\omega$ of empty clause $c$ to the edge it is associated with, and remove $c$ from the expression. When multiple empty clauses are associated with an edge, we simply assign the conjunction (multiplication) of their weights to the edge. To maintain canonicity, we only assign weights on the side of the positive cofactor.

The collapse rule previously introduced can easily be extended for the non-binary and weighted case as shown in Figure~\ref{fig:weightedcollapserule}.

\begin{figure}[H]
    \centering
    \newcommand{\sepfig}{0.18}
     \begin{subfigure}[t]{\sepfig\textwidth}
        \centering
        \begin{tikzpicture}[
                scale=0.3,
                every path/.style={>=latex},
                every node/.style={draw,circle},
                inner sep=0pt,
                minimum size=0.5cm,
                line width=1pt,
                thin,
                font=\normalsize
                ]

                \node[draw=none] (or1) at (0.5,0) {$\vee$};
                \node[draw=none] (and1) at (-2,-2)  {$\wedge$};
                \node[draw=none] (and2) at (3,-2)  {$\wedge$};

                \node[draw=none] (a1) at (-3,-4.5)  {$x_i$};
                \node[draw=none] (beta) at (4,-4.5)     {$\wedge$};

                \node[draw=none] (a2) at (2,-4.5)  {$x_j$};
                \node[draw=none] (alpha) at (-1,-4.5)     {$\wedge$};

                \node[draw=none] (w2) at (-2,-7)     {$\omega_i$};
                \node[draw=none] (g2) at (0,-7)     {\ $f_{{\parallel}x_i}$};

                \node[draw=none] (w1) at (3,-7)     {$\omega_j$};
                \node[draw=none] (g1) at (5,-7)     {\ $f_{{\parallel}x_j}$};

                \draw[-] (or1) edge (and1);
                \draw[-] (or1) edge (and2);
                \draw[-] (and1) edge (a1);
                \draw[-] (and2) edge (a2);
                \draw[-] (and1) edge (alpha);
                \draw[-] (and2) edge (beta);

                \draw[-] (beta) edge (w1);
                \draw[-] (beta) edge (g1);

                \draw[-] (alpha) edge (w2);
                \draw[-] (alpha) edge (g2);
        \end{tikzpicture}
    \end{subfigure}
    \begin{subfigure}[t]{0.05\textwidth}
        \centering
        \begin{tikzpicture}[
                scale=0.3,
                every path/.style={>=latex},
                every node/.style={draw,circle},
                inner sep=0pt,
                minimum size=0.5cm,
                line width=1pt,
                thin,
                font=\normalsize
                ]

                \node[draw=none] () at (0,0)   {};
                \node[draw=none] (a0) at (0,2)  {$=$};

        \end{tikzpicture}
    \end{subfigure}%
    \begin{subfigure}[t]{0.15\textwidth}
        \centering
        \begin{tikzpicture}[
                scale=0.3,
                every path/.style={>=latex},
                every node/.style={draw,circle},
                inner sep=0pt,
                minimum size=0.5cm,
                line width=1pt,
                thin,
                font=\normalsize
                ]

                \node[] (x1) at (0,0)  {$x_i$};
                \node[draw=none] (w1) at (-1.5,-1.5)  {$\omega_i$};
                \node[draw=none] (fx1) at (-1.5,-3.5)     {};
                \node[draw=none] (fx11) at (-1.5,-3.5)     {$f_{{\parallel}x_i}$};

                \node[] (x2) at (1.5,-3.5)     {$x_j$};
                \node[draw=none] (fx2) at (0,-7)     {};
                \node[draw=none] (fx22) at (0,-7)     {$f_{{\parallel}x_j}$};
                \node[draw=none] (w1) at (0,-5)  {$\omega_j$};
                \node[rectangle,minimum size=0.4cm] (false) at (3,-7)   {$0$};

                \draw[] (x1) edge (fx1);
                \draw[dotted] (x1) edge (x2);
                \draw[dotted] (x2) edge (false);
                \draw[] (x2) edge (fx2);

        \end{tikzpicture}
    \end{subfigure}%
    \begin{subfigure}[t]{0.12\textwidth}
        \centering
        \begin{tikzpicture}[
                scale=0.3,
                every path/.style={>=latex},
                every node/.style={draw,circle},
                inner sep=0pt,
                minimum size=0.5cm,
                line width=1pt,
                thin,
                font=\normalsize
                ]

                \node[draw=none] () at (0,0)   {};
                \node[draw=none] (a0) at (0,2)  {$\Longrightarrow$};

        \end{tikzpicture}
    \end{subfigure}%
    \begin{subfigure}[t]{0.13\textwidth}
        \centering
        \begin{tikzpicture}[
                scale=0.3,
                every path/.style={>=latex},
                every node/.style={draw,circle},
                inner sep=0pt,
                minimum size=0.5cm,
                line width=1pt,
                thin,
                font=\normalsize
                ]

                \node[] (x1) at (0,0)  {$x_i$};
                \node[draw=none] (fx1) at (-2,-3.5)     {};
                \node[draw=none] (fx11) at (-2,-3.5)     {$f_{{\parallel}x_i}$};
                \node[rectangle,minimum size=0.4cm] (false) at (2,-3.5)   {$0$};
                \node[draw=none] (w) at (-2,-1.5)     {$\omega_i$};
                \draw[] (x1) edge (fx1);
                \draw[dotted] (x1) edge (false);

        \end{tikzpicture}
    \end{subfigure}%
    \begin{subfigure}[t]{0.02\textwidth}
        \centering
        \begin{tikzpicture}[
                scale=0.3,
                every path/.style={>=latex},
                every node/.style={draw,circle},
                inner sep=0pt,
                minimum size=0.5cm,
                line width=1pt,
                thin,
                font=\normalsize
                ]

                \node[draw=none] () at (0,0)   {};
                \node[draw=none] (a0) at (0,2)  {$=$};

        \end{tikzpicture}
    \end{subfigure}%
    \begin{subfigure}[t]{0.20\textwidth}
        \centering
        \begin{tikzpicture}[
                scale=0.3,
                every path/.style={>=latex},
                every node/.style={draw,circle},
                inner sep=0pt,
                minimum size=0.5cm,
                line width=1pt,
                thin,
                font=\normalsize
                ]

                \node[draw=none] (or1) at (0.5,0) {$\wedge$};
                \node[draw=none] (and1) at (-2,-2)  {$\vee$};
                \node[draw=none] (and2) at (3,-2)  {$\wedge$};

                \node[draw=none] (a2) at (2,-4.5)  {$\omega_i$};
                \node[draw=none] (alpha) at (-1,-4.5)     {$x_j$};

                \node[draw=none] (a1) at (-3,-4.5)  {$x_i$};
                \node[draw=none] (beta) at (4,-4.5)     {$f_{{\parallel}x_i}$};

                \draw[-] (or1) edge (and1);
                \draw[-] (or1) edge (and2);
                \draw[-] (and1) edge (a1);
                \draw[-] (and2) edge (a2);
                \draw[-] (and1) edge (alpha);
                \draw[-] (and2) edge (beta);
        \end{tikzpicture}
    \end{subfigure}
    \caption{Application of the collapse rule, where functions $f_{{\parallel}x_i} = f_{{\parallel}x_j}$ and weights $\omega_i = \omega_j$ }
    \label{fig:weightedcollapserule}
\end{figure}
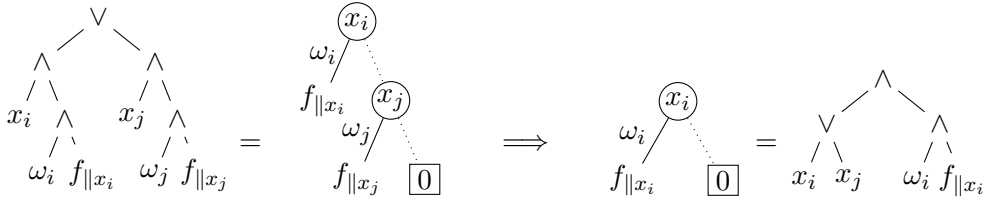

\begin{definition}\label{def:wpbdd}
    A \emph{weighted} PBDD (WPBDD) representing Boolean function $\mathcal{E}(f) = f^e$, where $f$ is defined over variables $X$, is a PBDD where each node $v$ is a tuple ${\langle}x_i,W,f^e_{{\parallel}x_i},f^e_{|\overline{x_i}}\rangle$ that represents a weighted reduced positive Shannon expansion:

\[ f^e \ \ \models\ \  x_i \land (W \land f^e_{{\parallel}x_i})\ \ \lor\ \ f_{|\overline{x_i}}, \]

\noindent where $x_i \in \mathcal{A}(X)$, and $W$ is a conjunction of weights $\omega_i$ that correspond to $f^e_{{\parallel}x_i}$. Positive and negative cofactors are as described by Lemma~\ref{def:positiveshannon}. It is a \emph{canonical} representation if \emph{reduced} by applying the following rules:

    \begin{enumerate}
        \item Merge rule: All isomorphic subgraphs are merged.
        \item Collapse rule: remove direct descendant $u$ of node $v$ iff $W \land f_{{\parallel}x_i} = W \land f_{{\parallel}x_j}$, where $var(v) = x_i$ and $var(u) = x_j$, with $x_i,x_j \in \mathcal{A}(x)$ and $x \in X$.
    \end{enumerate}
\end{definition}

\begin{example}\label{ex:partitionedcpt}
    Consider the CPTs from Example~\ref{ex:full}, where per CPT, equal probabilities are represented by unique symbolic weights $\omega_i$.

\begin{figure}[H]
    \centering
    \begin{tikzpicture}[
        scale=0.5,
        thin,
        font=\normalsize
        ]
        \node[draw=none](tab) at (0,0){
            \begin{minipage}{0.15\textwidth}
                \centering

                    \setlength{\tabcolsep}{4pt}
                    \begin{tabular}[t]{c | c}
                        $P(a_1)$ & $P(a_2)$ \\\hline
                        &\\[-2ex]
                        $\omega_1$ & $\omega_1$\\
                    \end{tabular}
            \end{minipage}
            \begin{minipage}{0.30\textwidth}
                \centering
                    \begin{tabular}[t]{c || c | c | c }
                        \setlength{\tabcolsep}{2pt}
                        $a$ & $P(b_1 | a)$ & $P(b_2 | a)$ & $P(b_3 | a)$\\\hline
                        &&&\\[-2ex]
                        \small{1} & $\omega_2$ & $\omega_2$ & $\omega_3$\\
                        \small{2} & $\omega_2$ & $\omega_2$ & $\omega_3$\\
                    \end{tabular}
            \end{minipage}
        };

        \begin{scope}[shift={(0.3,-0.35)}]
            \draw [rounded corners,dashed] (-6.8,0.1)--(-6.8,-0.8)--(-3.1,-0.8)--(-3.1,0.1)--cycle;
        \end{scope}

        \begin{scope}[shift={(0.6,-0.3)}]
            \draw [rounded corners,dashed] (0,0.5)--(0,-1.3)--(4.5,-1.3)--(4.5,0.5)--cycle;
        \end{scope}

        \begin{scope}[shift={(1.8,-0.2)}]
            \draw [rounded corners,dashed] (6.6,0.4)--(6.6,-1.4)--(5.5,-1.4)--(5.5,0.4)--cycle;
        \end{scope}
    \end{tikzpicture}
\end{figure}

Figure~\ref{fig:reduced} shows the minimization of a WPBDD using variable ordering $a_1 < a_2 < b_1 < b_2 < b_3$, that represents the BN with 3 probabilities instead of 8. Figure~\ref{fig:comparison} shows the comparison of this WPBDD with an OBDD representing the same function, given variable ordering $a_1 < a_2 < \omega_1 < b_1 < b_2 < b_3 < \omega_2 < \omega_3$, which results in a minimal OBDD that obeys the partial ordering used for the WPBDD.

\end{example}

\input{figures/reduced.tex}

\begin{figure}[t]
    \centering
    \begin{subfigure}[t]{0.3\textwidth}
        \centering
        \begin{tikzpicture}[
                scale=0.3,
                every path/.style={>=latex},
                every node/.style={draw,circle},
                inner sep=0pt,
                minimum size=0.5cm,
                line width=1pt,
                thin,
                font=\normalsize
                ]

                \node[] (a0) at (-1.5,-6) {$a_1$};
                \node[] (b0) at (4.5,-9)  {$b_1$};
                \node[] (b2) at (1.5,-12)   {$b_3$};

                \node[draw=none] (w0) at (2,-6.5)      {$\omega_1$};
                \node[draw=none] (w1) at (5.5,-12.5) {$\omega_2$};

                \node[draw=none] (w2) at (2.25,-14.3)   {$\omega_3$};
                \node[rectangle,minimum size=0.4cm] (false) at (-1.5,-16) {$0$};
                \node[rectangle,minimum size=0.4cm] (true) at (4.5,-16)   {$1$};

                \draw[dotted] (a0) edge (false);
                \draw[dotted] (b0) edge (b2);
                \draw[dotted] (b2) edge (false);
                \draw[] (b0) edge (true);
                \draw[] (b2) edge (true);

                \draw[] (a0) edge (b0);
        \end{tikzpicture}
        \caption{WPBDD}
    \end{subfigure}
    \vspace{1em}
    \newcommand{\sepfig}{0.3}
    \begin{subfigure}[t]{\sepfig\textwidth}
        \centering
        \begin{tikzpicture}[
                scale=0.25,
                every path/.style={>=latex},
                every node/.style={draw,circle},
                inner sep=0pt,
                minimum size=0.5cm,
                line width=1pt,
                thin,
                font=\normalsize
                ]

                \node[] (a1) at (0,0) {$a_1$};
                \node[] (a21) at (-3,-3) {$a_2$};
                \node[] (a22) at (3,-3) {$a_2$};
                \node[] (w1) at (0,-6) {$\omega_1$};
                \node[] (b1) at (3,-9) {$b_1$};
                \node[] (b21) at (0,-12) {$b_2$};
                \node[] (b22) at (6,-12) {$b_2$};
                \node[] (b31) at (3,-15) {$b_3$};
                \node[] (b32) at (9,-15) {$b_3$};
                \node[] (w2) at (6,-18) {$\omega_2$};
                \node[] (w3) at (12,-18) {$\omega_3$};

                \node[rectangle,minimum size=0.4cm] (false) at (-3,-21) {$0$};
                \node[rectangle,minimum size=0.4cm] (true) at (9,-21)   {$1$};
                \draw[] (a1) edge (a21);
                \draw[dotted] (a1) edge (a22);
                \draw[] (a21) edge (false);
                \draw[dotted] (a21) edge (w1);
                \draw[dotted] (a22) edge (false);
                \draw[] (a22) edge (w1);
                \draw[] (w1) edge (b1);
                \draw[dotted] (w1) edge (false);
                \draw[] (b1) edge (b21);
                \draw[dotted] (b1) edge (b22);
                \draw[] (b21) edge (false);
                \draw[dotted] (b21) edge (b31);
                \draw[] (b22) edge (b31);
                \draw[dotted] (b22) edge (b32);
                \draw[] (b31) edge (false);
                \draw[dotted] (b31) edge (w2);
                \draw[] (b32) edge (w3);
                \draw[dotted] (b32) edge (false);

                \draw[] (w2) edge (true);
                \draw[dotted] (w2) edge (false);
                \draw[] (w3) edge (true);
                \draw[dotted] (w3) edge (false);
        \end{tikzpicture}
        \caption{OBDD}
    \end{subfigure}
    \vspace{-1em}
    \caption{WPBDD and OBDD comparison}
    \label{fig:comparison}
\end{figure}
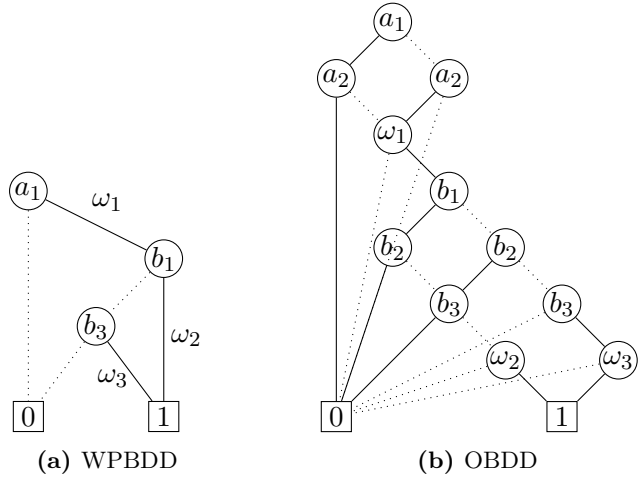

\section{Symbolic Inference}\label{sec:symbolicinference}
We perform Bayesian inference through a three phase process: encoding, compiling and model counting:

\begin{center}
    \begin{footnotesize}
\begin{tabular}{l | l | l | l}
    & Composition & Input & Output\\\hline&&&\\[-2ex]
    Encoder   & Encoding $\mathcal{E}$     & $f$                        & Theory $\mathcal{T}$ + $f^e$\\
    Compiler  & $\mathcal{T}$-solver + SAT & $\mathcal{T}$ + $f^e$ & WPBDD $\varphi$\\
    Counter   & $\mathcal{T}$-solver + WMC & $\mathcal{T}$ + $\varphi$  & $P(x|\boldsymbol{e})$\\
\end{tabular}
\end{footnotesize}
\end{center}

A BN represented by $f$ is first encoded by the \emph{encoder} as Boolean function $f^e$ using encoding $\mathcal{E}$ as defined in Section~\ref{sec:encoding}. The encoder also provides background theory~$\mathcal{T}$ representing $f^c$, i.e., the constraints among variables that support mapping $\mathcal{M}$.

The \emph{compiler} uses a lazy SMT-solver to record evaluation paths and as a WPBDD, given $f^e$ and theory $\mathcal{T}$. A lazy SMT-solver combines a SAT-solver with a theory-solver (or $\mathcal{T}$-solver) for some theory~$\mathcal{T}$. Traditionally, the role of a theory-solver is to purely report back on the satisfiability of $\mathcal{T}$. We have extended the theory-solver to provide more information in order to support implicit conditioning in the SAT-solver.

The \emph{counter} computes the probability of $x$ given evidence $\boldsymbol{e}$, by translating the provided WPBDD into an arithmetic circuit and using the extended capabilities of the $T$-solver to properly instantiate the variables.

\subsection{Compilation}\label{sec:compilation}
The order of decomposition greatly influences representation size. Compilation therefore reduces to finding the optimal variable ordering. Satisfiability (SAT) is key during compilation. Normally, when CNF $f$ contains an empty clause we derive a contradiction. Note that if all contradicting clauses are weighted, we supersede the contradiction and introduce their weights into our representation at the corresponding edge. In order to obtain a minimal BDD representation, the search space of all variable instantiations is traversed with a DPLL-style algorithm, in order to find partial instantiations that describe equal (sub)functions \cite{bacchus2003dpll}, as depicted in Figure~\ref{fig:compilation}.

\begin{figure}[H]
    \centering
    \begin{subfigure}[t]{0.15\textwidth}
        \centering
        \begin{tikzpicture}[
                scale=0.3,
                every path/.style={>=latex},
                every node/.style={draw,circle},
                inner sep=0pt,
                minimum size=0.5cm,
                line width=1pt,
                thin,
                font=\normalsize
                ]

                \node[] (a) at (0,0)  {$a$};
                \node[] (b1) at (-1.5,-3)     {$b$};
                \node[draw=none] (b2) at (1.5,-3)     {};
                \node[] (c1) at (-3,-6)   {$c$};
                \node[draw=none] (c2) at (0,-6)   {};
                \node[rectangle,minimum size=0.4cm] (false) at (-1.5,-9)   {$0$};
                \node[rectangle,minimum size=0.4cm] (true) at (1.5,-9)   {$1$};
                \draw[dotted] (a) edge (b1);
                \draw[dotted] (b1) edge (c1);
                \draw[dotted] (c1) edge (true);

        \end{tikzpicture}
        \caption{$\overline{a} - \overline{b} - \overline{c}$}
    \end{subfigure}%
    \begin{subfigure}[t]{0.15\textwidth}
        \centering
        \begin{tikzpicture}[
                scale=0.3,
                every path/.style={>=latex},
                every node/.style={draw,circle},
                inner sep=0pt,
                minimum size=0.5cm,
                line width=1pt,
                thin,
                font=\normalsize
                ]

                \node[] (a) at (0,0)  {$a$};
                \node[] (b1) at (-1.5,-3)     {$b$};
                \node[draw=none] (b2) at (1.5,-3)     {};
                \node[] (c1) at (-3,-6)   {$c$};
                \node[draw=none] (c2) at (0,-6)   {};
                \node[rectangle,minimum size=0.4cm] (false) at (-1.5,-9)   {$0$};
                \node[rectangle,minimum size=0.4cm] (true) at (1.5,-9)   {$1$};
                \draw[dotted] (a) edge (b1);
                \draw[dotted] (b1) edge (c1);
                \draw[dotted] (c1) edge (true);
                \draw[-] (c1) edge (false);
        \end{tikzpicture}
        \caption{$\overline{a} - \overline{b} - c$}
    \end{subfigure}%
    \begin{subfigure}[t]{0.05\textwidth}
        \centering
        \begin{tikzpicture}[
                scale=0.3,
                every path/.style={>=latex},
                every node/.style={draw,circle},
                inner sep=0pt,
                minimum size=0.5cm,
                line width=1pt,
                thin,
                font=\normalsize
                ]
                \node[draw=none] (dummy1) at (0,0) {};
                \node[draw=none] (dot) at (0,-2) {$\dotsc$};
                \node[draw=none] (dummy1) at (0,-5) {};
        \end{tikzpicture}
    \end{subfigure}%
    \begin{subfigure}[t]{0.15\textwidth}
        \centering
        \begin{tikzpicture}[
                scale=0.3,
                every path/.style={>=latex},
                every node/.style={draw,circle},
                inner sep=0pt,
                minimum size=0.5cm,
                line width=1pt,
                thin,
                font=\normalsize
                ]

                \node[] (a) at (0,0)  {$a$};
                \node[] (b1) at (-1.5,-3)     {$b$};
                \node[draw=none] (b2) at (1.5,-3)     {};
                \node[] (c1) at (-3,-6)   {$c$};
                \node[] (c2) at (0,-6)   {$c$};
                \node[rectangle,minimum size=0.4cm] (false) at (-1.5,-9)   {$0$};
                \node[rectangle,minimum size=0.4cm] (true) at (1.5,-9)   {$1$};
                \draw[dotted] (a) edge (b1);

                \draw[dotted] (b1) edge (c1);
                \draw[-] (b1) edge (c2);

                \draw[dotted] (c1) edge (true);
                \draw[-] (c1) edge (false);

                \draw[dotted] (c2) edge (false);
                \draw[-] (c2) edge (true);

        \end{tikzpicture}
        \caption{$\overline{a} - b - c$}
    \end{subfigure}%
    \begin{subfigure}[t]{0.15\textwidth}
        \centering
        \begin{tikzpicture}[
                scale=0.3,
                every path/.style={>=latex},
                every node/.style={draw,circle},
                inner sep=0pt,
                minimum size=0.5cm,
                line width=1pt,
                thin,
                font=\normalsize
                ]

                \node[] (a) at (0,0)  {$a$};
                \node[] (b1) at (-1.5,-3)     {$b$};
                \node[] (b2) at (1.5,-3)     {$b$};
                \node[] (c1) at (-3,-6)   {$c$};
                \node[] (c2) at (0,-6)   {$c$};
                \node[rectangle,minimum size=0.4cm] (false) at (-1.5,-9)   {$0$};
                \node[rectangle,minimum size=0.4cm] (true) at (1.5,-9)   {$1$};
                \draw[dotted] (a) edge (b1);
                \draw[-] (a) edge (b2);

                \draw[dotted] (b1) edge (c1);
                \draw[-] (b1) edge (c2);

                \draw[dotted] (b2) edge (c2);

                \draw[dotted] (c1) edge (true);
                \draw[-] (c1) edge (false);

                \draw[dotted] (c2) edge (false);
                \draw[-] (c2) edge (true);

        \end{tikzpicture}
        \caption{$a - \overline{b}$}
        \label{fig:compilationd}
    \end{subfigure}
    \begin{subfigure}[t]{0.05\textwidth}
        \centering
        \begin{tikzpicture}[
                scale=0.3,
                every path/.style={>=latex},
                every node/.style={draw,circle},
                inner sep=0pt,
                minimum size=0.5cm,
                line width=1pt,
                thin,
                font=\normalsize
                ]
                \node[draw=none] (dummy1) at (0,0) {};
                \node[draw=none] (dot) at (-0.5,-2) {$\dotsc$};
                \node[draw=none] (dummy1) at (0,-5) {};
        \end{tikzpicture}
    \end{subfigure}%
    \begin{subfigure}[t]{0.15\textwidth}
        \centering
        \begin{tikzpicture}[
                scale=0.3,
                every path/.style={>=latex},
                every node/.style={draw,circle},
                inner sep=0pt,
                minimum size=0.5cm,
                line width=1pt,
                thin,
                font=\normalsize
                ]

                \node[] (a) at (0,0)  {$a$};
                \node[] (b1) at (-1.5,-3)     {$b$};
                \node[] (b2) at (1.5,-3)     {$b$};
                \node[] (c1) at (-3,-6)   {$c$};
                \node[] (c2) at (0,-6)   {$c$};
                \node[rectangle,minimum size=0.4cm] (false) at (-1.5,-9)   {$0$};
                \node[rectangle,minimum size=0.4cm] (true) at (1.5,-9)   {$1$};
                \draw[dotted] (a) edge (b1);
                \draw[-] (a) edge (b2);

                \draw[dotted] (b1) edge (c1);
                \draw[-] (b1) edge (c2);

                \draw[-] (b2) edge (true);
                \draw[dotted] (b2) edge (c2);

                \draw[dotted] (c1) edge (true);
                \draw[-] (c1) edge (false);

                \draw[dotted] (c2) edge (false);
                \draw[-] (c2) edge (true);

        \end{tikzpicture}
        \caption{$a - b$}
    \end{subfigure}
    \caption{Compilation example for non-trivial function, given ordering $a \rightarrow b \rightarrow c$}
    \label{fig:compilation}
\end{figure}
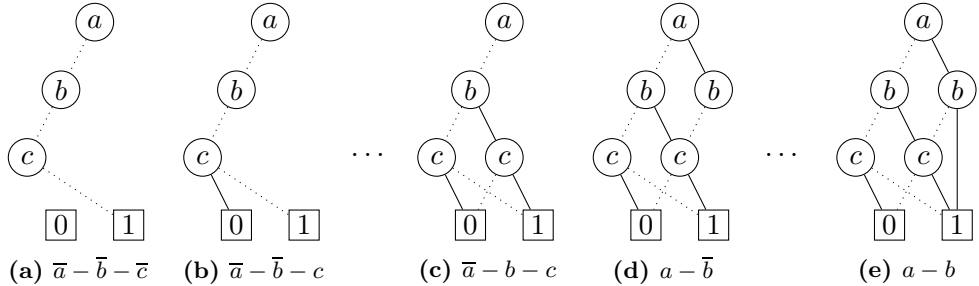

Algorithm~\ref{alg:compiler} shows a Depth-first search/dynamic programming (DFS+DP) approach that uses a lazy SMT-solver ({\small\ttfamily\selectfont solver}) to compile $f^e$ into a WPBDD, given some ordering on the variables ({\small\ttfamily\selectfont ordering}), where $\top$ and $\bot$ denote the terminal nodes representing \emph{true} and \emph{false}, respectively.

The lazy SMT-solver is unique in the sense that there exists a link beyond satisfiability feedback between the $\mathcal{T}$-solver and the SAT-solver ({\small\ttfamily\selectfont solver.theory} and {\small\ttfamily\selectfont solver.sat}, respectively). The traditional role of the theory-solver is to solely report back on satisfiability. To implement implicit conditioning, we have extended it to also provide its unit clauses, that are used by the SAT-solver to further condition $f^e$. This connection is possible because $f^e$ and theory~$\mathcal{T}$ both essentially depend on the same variables. The SMT-solver reports $f^e$ to be  satisfiable only when both the theory and SAT-solver agree on this. Storing intermediate states as an undo mechanism for the solver is infeasible, thus it has the ability to dynamically undo any conditioning ({\small\ttfamily\selectfont solver.undo}).

The compiler uses the satisfiability state of the SMT-solver ({\small\ttfamily\selectfont solver.state}) to build the WPBDD and achieves a canonical form by applying to each subfunction the merge rule (as describe by \cite{brace1991efficient}) and collapse rule ({\small\ttfamily\selectfont apply\_merge\_rule(n)} and {\small\ttfamily\selectfont apply\_collapse\_rule(n)}, respectively).

\begin{algorithm}[t!]

\hspace{1em}\begin{minipage}[t]{.55\textwidth}
\begin{lstlisting}[mathescape=true]{topdown}
struct node {
    literal l;
    node *t, *e;
    set W;
};

enum satisfy_t {
    satisfiable   = 0,
    unsatisfied   = 1,
    unsatisfiable = 3
};

satisfy_t solver::condition(literal l){
    solver.theory.condition(l);
    solver.sat.condition(l);
    solver.sat.condition(
        solver.theory.unit_clauses());

    return solver.theory.state() |
        solver.sat.state();
}

node* condition(literal l)
    solver.condition(l);
    if(not negated(l))
        n->W += solver.sat.weights();

    node *n;
    switch(solver.state()){
        case unsatisfied:
            n = compile(new node,i+1);
            break;
\end{lstlisting}
\end{minipage}
\begin{minipage}[t]{.4\textwidth}
\begin{lstlisting}[mathescape=true]{topdown}
        case satisfiable:
            n = $\top$;
            break;
        case unsatisfiable:
            n = $\bot$;
            break;
    }
    solver.undo();

    return n;
}

node* compile(n,i=0){
    n->l = ordering[i];
    n->t = condition(n->l);
    n->e = condition(not n->l);

    apply_collapse_rule(n);
    apply_merge_rule(n);

    return n;
}

wpbdd* compiler($\mathcal{T}$,$f^e$){
    solver.theory.init($\mathcal{T}$);
    solver.sat.init($f^m$);

    return compile(new node);
}
\end{lstlisting}
\end{minipage}

\caption{Compiler}
\label{alg:compiler}
\end{algorithm}

\subsection{Inference by Weighted Model Counting}\label{sec:inference}
In order to perform inference by WMC, a WPBDD must be converted into a logical (refactored) form using Definition~\ref{def:wpbdd}. Recall that a missing variable along a path implies the use of the distributive law, identifiable by using the variable ordering and support set $\mathcal{S}$. The logical form can easily be translated into an arithmetic circuit according to Table~\ref{tab:convert2ac}. Note that $x \lor y$ reduces to $x + y$, when $x$ and $y$ originate from the same dimension, i.e., $x,y \in \mathcal{A}(z)$, with $z \in X$.

\begin{table}[H]
    \centering
    \begin{tabular}{c | c}
        Logical & Arithmetic\\\hline
        $x$ & $x$\\
        $\overline{x}$ & $(1-x)$\\
        $x \wedge y$ & $x * y$\\
        $x \vee y$ & $x + y - x * y$\\
    \end{tabular}
    \caption{Converting logical to arithmetic operator}
    \label{tab:convert2ac}
\end{table}

One of the reasons for using the positive Shannon decomposition is to prevent constraints among variables to be represented twice in the described process of symbolic inference: once as part of the compiled representation, and again when we substitute literals with their appropriate weight in order to perform model counting. During this later phase, theory~$\mathcal{T}$ is used to prohibit inconsistent network instantiations, preventing a state where multiple values are assigned to one variable. To perform inference, all weights $\omega_i$ are set to the probability they represent, and all other literals are set to 1. By conditioning $\mathcal{T}$ on the evidence using the theory-solver, literals are found  that conflict with the evidence in the form of unit clauses. These must be set to 0.

\begin{example}\label{ex:refactoring}
        Let $f^c = (a_1 \lor a_2)\ \land\ (\overline{a_1} \lor \overline{a_2})$ represent the constraint clauses for variable $a$ of Example~\ref{ex:full}. When computing $P(a_1)$ we condition $f^c$ on evidence $a_1$ yielding $f^c_{a_1} = \overline{a_2}$, thus evidence $a_1$ implies $a_1 = 1$ ($true$) and $a_2 = 0$ ($false$). This process is shown in Figure~\ref{fig:inference}.

\begin{figure}[H]
    \centering

    \newcommand{\sepfig}{0.3}
    \begin{subfigure}[t]{0.2\textwidth}
        \centering
        \begin{tikzpicture}[
                scale=0.3,
                every path/.style={>=latex},
                every node/.style={draw,circle},
                inner sep=0pt,
                minimum size=0.5cm,
                line width=1pt,
                thin,
                font=\normalsize
                ]

                \node[] (a0) at (-1.5,-6) {$a_1$};
                \node[] (b0) at (4.5,-9)  {$b_1$};
                \node[] (b2) at (1.5,-12)   {$b_3$};

                \node[draw=none] (w0) at (2,-6.5)      {$\omega_1$};
                \node[draw=none] (w1) at (5.5,-12.5) {$\omega_2$};

                \node[draw=none] (w2) at (2.25,-14.3)   {$\omega_3$};
                \node[rectangle,minimum size=0.4cm] (false) at (-1.5,-16) {$0$};
                \node[rectangle,minimum size=0.4cm] (true) at (4.5,-16)   {$1$};

                \draw[dotted] (a0) edge (false);
                \draw[dotted] (b0) edge (b2);
                \draw[dotted] (b2) edge (false);
                \draw[] (b0) edge (true);
                \draw[] (b2) edge (true);

                \draw[] (a0) edge (b0);
        \end{tikzpicture}
        \caption{WPBDD}
    \end{subfigure}
    \begin{subfigure}[t]{\sepfig\textwidth}
        \centering
        \begin{tikzpicture}[
                scale=0.3,
                every path/.style={>=latex},
                every node/.style={draw,circle},
                inner sep=0pt,
                minimum size=0.5cm,
                line width=1pt,
                thin,
                font=\normalsize
                ]

                \node[draw=none] (l1) at (0,0)     {$\land$};
                \node[draw=none] (l2) at (-3,-3)     {$\lor$};
                \node[draw=none] (l3) at (3,-3)     {$\land$};
                \node[draw=none] (l4) at (4.5,-6)     {$\lor$};
                \node[draw=none] (l5) at (1.5,-9)     {$\land$};
                \node[draw=none] (l6) at (7.5,-9)     {$\land$};
                \node[draw=none] (l7) at (0,-12)     {$\lor$};

                \node[draw=none] (a0) at (-4.5,-6)     {$a_1$};
                \node[draw=none] (a1) at (-1.5,-6)     {$a_2$};

                \node[draw=none] (b0) at (-1.5,-15)     {$b_1$};
                \node[draw=none] (b1) at (1.5,-15)     {$b_2$};
                \node[draw=none] (b2) at (6,-12)     {$b_3$};

                \node[draw=none] (t0) at (1.5,-6)     {$\omega_1$};

                \node[draw=none] (t1) at (3,-12)     {$\omega_2$};
                \node[draw=none] (t2) at (9,-12)     {$\omega_3$};

                \draw[] (l1) edge (l2);
                \draw[] (l1) edge (l3);
                \draw[] (l1) edge (l3);
                \draw[] (l3) edge (l4);
                \draw[] (l4) edge (l5);
                \draw[] (l4) edge (l6);
                \draw[] (l5) edge (l7);

                \draw[] (l2) edge (a0);
                \draw[] (l2) edge (a1);
                \draw[] (l7) edge (b0);
                \draw[] (l7) edge (b1);
                \draw[] (l6) edge (b2);

                \draw[] (l3) edge (t0);
                \draw[] (l5) edge (t1);
                \draw[] (l6) edge (t2);
        \end{tikzpicture}
        \caption{Logical circuit}
    \end{subfigure}
    \begin{subfigure}[t]{\sepfig\textwidth}
        \centering
        \begin{tikzpicture}[
                scale=0.3,
                every path/.style={>=latex},
                every node/.style={draw,circle},
                inner sep=0pt,
                minimum size=0.5cm,
                line width=1pt,
                thin,
                font=\normalsize
                ]

                \node[draw=none] (l1) at (0,0)     {$*$};
                \node[draw=none] (l2) at (-3,-3)     {$+$};
                \node[draw=none] (l3) at (3,-3)     {$*$};
                \node[draw=none] (l4) at (4.5,-6)     {$+$};
                \node[draw=none] (l5) at (1.5,-9)     {$*$};
                \node[draw=none] (l6) at (7.5,-9)     {$*$};
                \node[draw=none] (l7) at (0,-12)     {$+$};

                \node[draw=none] (a0) at (-4.5,-6)     {$1$};
                \node[draw=none] (a1) at (-1.5,-6)     {$0$};

                \node[draw=none] (b0) at (-1.5,-15)     {$1$};
                \node[draw=none] (b1) at (1.5,-15)     {$1$};
                \node[draw=none] (b2) at (6,-12)     {$1$};

                \node[draw=none] (t0) at (1.5,-6)     {$0.5$};

                \node[draw=none] (t1) at (3,-12)     {$0.2$};
                \node[draw=none] (t2) at (9,-12)     {$0.6$};

                \draw[] (l1) edge (l2);
                \draw[] (l1) edge (l3);
                \draw[] (l1) edge (l3);
                \draw[] (l3) edge (l4);
                \draw[] (l4) edge (l5);
                \draw[] (l4) edge (l6);
                \draw[] (l5) edge (l7);

                \draw[] (l2) edge (a0);
                \draw[] (l2) edge (a1);
                \draw[] (l7) edge (b0);
                \draw[] (l7) edge (b1);
                \draw[] (l6) edge (b2);

                \draw[] (l3) edge (t0);
                \draw[] (l5) edge (t1);
                \draw[] (l6) edge (t2);
        \end{tikzpicture}
        \caption{Instantiated arithmetic circuit}
    \end{subfigure}
    \caption{Probabilistic inference}
    \label{fig:inference}
\end{figure}
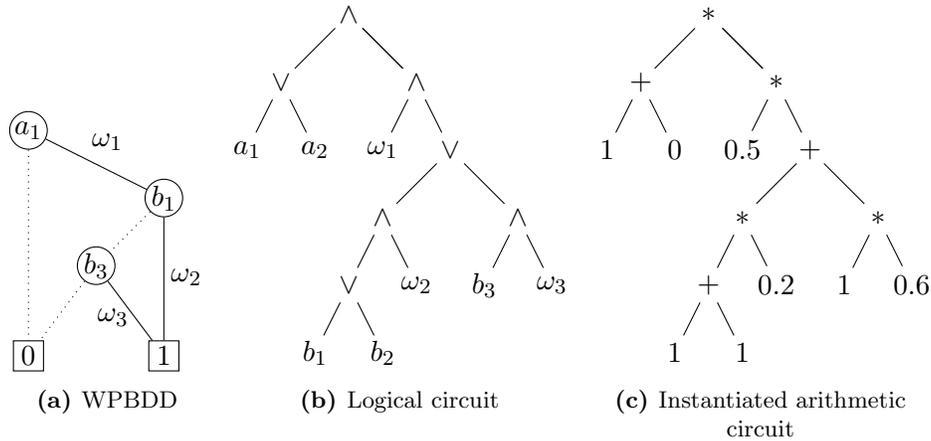

\end{example}

\section{Optimizations}\label{sec:optimizations}

\subsection{Encoding}
The constraint clauses $f^c$ of encoded function $\mathcal{E}(f) = f^e$, where $f$ is defined over variables $X$, introduce predictable symmetries into the encoding (demonstrated by Example~\ref{ex:unitclauses}). By incorporating these constraints directly into the compilation process through theory~$\mathcal{T}$, the constraint clauses $f^c$ generated by Equation~\ref{eq:variables} and \ref{eq:constraints} become obsolete and can thus be removed from $f^e$. This reduces the number of clauses in the encoding by:

\[\sum_{x \in X} \underbrace{\vphantom{\binom{n}{2}}1}_{\substack{ALO\\clause}} + \underbrace{
\binom{n}{2}}_{\substack{AMO\\clauses}},\]

\noindent where we sum over every $x \in X$, with $n$ the domain size of $x$, i.e., $|\mathcal{A}(x)|$. Both the at-least-once (AMO) and at-most-once (AMO) clauses contribute to reducing the number of clauses in the encoding to the number of probabilities in the CPTs of the BN. This gives an advantage over related work using the direct encoding, as it puts less strain on the SAT-solver by requiring it to only process $\mathcal{M}(f)$.

\subsection{Compiler}
The compiler uses a lazy SMT-solver, consisting of a theory- and SAT-solver. In the way we have build the compiler, it naturally allows for optimization by providing the ability to substitute the SAT-solver with any other state-of-the-art solver. We have optimized the SMT-solver by using the structure expressed by the encoding, and the fact that theory $\mathcal{T}$ and $f^e$ are defined over the same variables.

We have optimized the theory-solver such that it now supports \emph{constant time conditioning} of constraint clauses, which can take up one third of the encoding as shown by experimental results later. All one needs is the function $V: \mathcal{A}(X) \rightarrow X$ that maps literal $x_i$ back to $x$, where $x_i \in \mathcal{A}(x)$. For each $x$ we maintain a counter that is initialized to the domain size of $x$, i.e., $|\mathcal{A}(x)|$. Conditioning on negated literal $\overline{x_i}$ will decrease the counter corresponding to $x$ by 1. If the counter reaches 0, we derive contradiction (i.e., unsatisfiable as $x$ has no value). Conditioning on positive literal $x_i$ will cause any following conditioning on $x_j \in \mathcal{A}(x){\backslash}x_i$ as redundant (i.e., $x$ can only have one value). The SAT-solver will be bypassed completely as a result and the compiler will continue with the next variable in the ordering, saving additional time.

We have simplified the SAT-solver considerably by taking advantage of the structure of $\mathcal{M}(f)$. We use an one-to-many map $Q: \mathcal{A}(X) \rightarrow O$ from literal $l \in \mathcal{A}(X)$ to the clauses $O$ it occurs in, i.e., $Q(l) = \{c^1,\dotsc,c^n\}$. For each clause $c^i$, we maintain if it is satisfied with a counter, initialized to the number of literals it consists of. When conditioning on positive literal $l$ we decrease counters associated with $Q(l)$ by 1. The clauses of which the counters have reached 0 are marked as satisfied, and their corresponding weights are set aside to be introduced into the representation later. Conditioning on negated literal $\overline{l}$ will mark clauses $Q(\overline{l})$ as satisfied. This is possible because $\mathcal{M}(f)$ only contains negated literals, and we are able to assume that $Q(l) \cap Q(\overline{l}) = \emptyset$. When all clauses in $f^m$ are satisfied, we derive $f$ to be satisfiable given the evaluated instantiation. In combination with the SAT-solver being bypassed in the case of the previously mentioned redundant variables, this allows for conditioning in \emph{linear time}, in the number of clauses that $l$ occurs in.

\section{Experimental Results}\label{sec:results}

\begin{table}[t!]
    \centering
    \begin{small}
        \begin{tabular} {l| r r r r r r r}                                      
            Bayesian\\ Network & $X$ &$\mathcal{A}(X)$ & $C$& $P$ & $P^u$ & $P^d$ \\\hline      
            &&&&&&&\\[-2ex]
            example         & 2    & 5    & 6     & 8      & 3     & 5      \\  
            cancer          & 5    & 10   & 10    & 20     & 20    & 0      \\  
            earthquake      & 5    & 10   & 10    & 20     & 20    & 0      \\  
            asia            & 8    & 16   & 16    & 36     & 27    & 9      \\  
            survey          & 6    & 14   & 16    & 37     & 37    & 0      \\  
            student farm    & 12   & 25   & 26    & 70     & 44    & 26     \\  
            sachs           & 8    & 24   & 32    & 228    & 175   & 53     \\  
            poker           & 7    & 43   & 145   & 748    & 71    & 677    \\  
            child           & 20   & 60   & 93    & 344    & 161   & 183    \\  
            carpo           & 54   & 122  & 139   & 554    & 246   & 308    \\  
            powerplant      & 40   & 120  & 160   & 432    & 360   & 72     \\  
            alarm           & 37   & 105  & 143   & 752    & 182   & 570    \\  
            win95pts        & 76   & 152  & 152   & 1148   & 274   & 874    \\  
            insurance       & 27   & 89   & 142   & 1419   & 427   & 992    \\  
            andes           & 220  & 440  & 440   & 2308   & 652   & 1656   \\  
            hepar2          & 70   & 162  & 190   & 2139   & 1922  & 217    \\  
            hailfinder      & 56   & 223  & 470   & 3741   & 835   & 2906   \\  
            pigs            & 441  & 1323 & 1764  & 8427   & 1474  & 6953   \\  
            link            & 714  & 1793 & 2304  & 20462  & 1282  & 19180  \\  
            water           & 32   & 116  & 188   & 13484  & 3578  & 9906   \\  
            munin1          & 186  & 992  & 3522  & 19226  & 4323  & 14903  \\  
            pathfinder      & 135  & 520  & 3600  & 106432 & 2379  & 104053 \\  
            weeduk          & 15   & 90   & 347   & 22611  & 4600  & 18011  \\  
            fungiuk         & 15   & 165  & 1144  & 43007  & 8990  & 34017  \\  
            munin2          & 1003 & 5376 & 19460 & 83920  & 23228 & 60692  \\  
            munin3          & 1041 & 5601 & 20292 & 85615  & 24495 & 61120  \\  
            munin           & 1041 & 5651 & 20432 & 98423  & 24222 & 74201  \\  
            munin4          & 1038 & 5645 & 20426 & 97943  & 24621 & 73322  \\  
            mildew          & 35   & 616  & 17550 & 547158 & 14772 & 532386 \\  
            mainuk          & 48   & 421  & 3607  & 130180 & 18883 & 111297 \\  
            diabetes        & 413  & 4682 & 31738 & 461069 & 17888 & 443181 \\  
            barley          & 48   & 421  & 3607  & 130180 & 36924 & 93256  \\  
        \end{tabular}
    \end{small}
    \caption{Various statistics on BNs and their encoding, where the number of variables $X$, literals $\mathcal{A}(X)$, constraint clauses $C$, probabilities $P$, cumulative amount of unique probabilites per CPT $P^u$ and the number of deterministic probabilities $P^d$ are shown.}
    \label{tab:results}
\end{table}

We have developed a tool chain, that can encode a Bayesian network into CNF, compile it to various different representations, and perform inference using the arithmetic circuits they induce. Using over 30 publicly available Bayesian networks, we provide empirical results on encoding size, representation size and compilation time comparisons to other well known representations and compilers. We also compare the time it takes to perform exact inference compared to the classic Junction tree algorithm.

Statistics related to the encoding are shown in Table~\ref{tab:results}, which include Example~\ref{ex:full} as BN \verb+example+. The number of clauses produced by encoding $\mathcal{E}$ is equal to $|f^c|+|P|$, for constraint clauses $f^c$ and mapping $\mathcal{M}$, disregarding determinism. We can reduce the size of the encoding by up to a third, by moving constraint clauses to the theory solver, additionally allowing us to perform constant time conditioning on them. We can also see that the majority of the BNs will benefit greatly by the techniques in this paper by looking at the amount of equal and deterministic probabilities they contain.

We have developed a compiler that supports compilation of Bayesian networks to OBDDs and ZBDDs (using the CUDD\footnote{Available at http://vlsi.colorado.edu/{\raise.17ex\hbox{$\scriptstyle\sim$}}fabio/} 3.0.0 library), SDDs \cite{darwiche2011sdd} (using the SDD\footnote{Available at http://reasoning.cs.ucla.edu/sdd/} 1.1.1 library), and WPBDDs. Each decision diagram is created with the same ordering, within the same framework, i.e., doing the same amount of work in the same order. Quite literally, the only differences are the inserted appropriate function calls to different libraries, and the output representation. This will have comparative implications to whether a particular compilation will succeed given resource constraints as time and memory. At the same time, we did not tune the algorithms to ensure that our algorithm stood out favorably, ensuring fair comparison.

The framework divides the compilation process in two for efficiency. The logical representation of each CPT is first compiled separately, and then conjoined to represent the full distribution. The later is essentail for producing a logical circuit with a consistent model count in order to perform for inference. All results regarding the WPBDD compiler have been produced with a hybrid approach, where CPTs are compiled in a topdown fashion, and conjoined bottom-up. We found that a fully bottomup approach is only favorable when BNs have large CPTs like \verb+mildew+, where we got a 5x speedup compared to the topdown approach. In practice, large CPTs are usually avoid as they increase the complexity of inference.

Many strategies were explored in order to find a good variable ordering for each BN. Using simulated annealing in combination with an upper bound function yielded best results by far. The variable orderings were used to induce orderings based on literals, by saying that literal $x_1$ must come before $y_1$ if variable $x$ comes before $y$ in the variable ordering, where $x,y \in X$, $x_1 \in \mathcal{A}(x)$ and $y_1 \in \mathcal{A}(y)$. The weights are introduced into the ordering as literals precisely when the WPBDD would introduce them as edge weights.

Tables~\ref{tab:sizes} and \ref{tab:runtimes} show a comparative study between representation size and compilation time of supported representations, where WPBDD$^{nc}$ is a WPBDD where the collapse rule has not been applied, in order to show the impact that this rule has. SDDs and SDD$^r$s are compiled using a balanced and right-aligned vtree ordering, respectively. A left-to-right traversal of these vtrees produces the ordering also used for the other representations. Table~\ref{tab:sizes} indicates compilation failure due to a 24Gb RAM memory limitation or an one hour time limit by symbols - and *, respectively. The progress each failed compilation made before is indicated in Table~\ref{tab:runtimes}. All experiments were run using an Intel Xeon E5620 CPU.

Table~\ref{tab:sizes} shows a size comparison of each representation by the only commom size metricperators in the logical circuit that each decision diagram induces. We can see that WPBDDs have 60\% less logical operators than the corresponding OBDDs on average at both stages of compilation, reducing inference time and system requirements considerably. Also, a WPBDD is reduced by 15\% on average by applying the collapse rule when compiling CPTs, and 6\% reduction on average with fully compiled networks. This statistic is fully determined by the amount of local structure in the BN and the ordering used during compilation, and can greatly be improved upon utilizing techniques as dynamic compilation in the future.

Observe that there is a close relation between then size of OBDDs and SDD$^r$s, as mentioned in \cite{darwiche2011sdd}. We can see that the size of each SDD$^r$ is marginally smaller than its corresponding OBDD in Table~\ref{tab:sizes}. We assume that this is because SDDs have multi-valued logical-OR operators, which allow for more concise representations. SDDs consist of binary logical-AND, and $n$-ary logical-OR operators. We have included OR operators in size computations as $n-1$ binary logical-OR operators.

\begin{landscape}

\begin{table}[t!]
    \centering
        \begin{small}
            \begin{tabular} {l| rrrrrr | rrrrrr }
                Bayesian & \multicolumn{6}{c|}{Number of operators per CPT} & \multicolumn{6}{c}{Total number of operators}\\
                Network  &
                    \footnotesize{WPBDD} &\footnotesize{WPBDD$^{nc}$}& \footnotesize{OBDD} & \footnotesize{ZBDD}  & \footnotesize{SDD$^r$} & \footnotesize{SDD} &
                    \footnotesize{WPBDD} &\footnotesize{WPBDD$^{nc}$} & \footnotesize{OBDD} & \footnotesize{ZBDD}  & \footnotesize{SDD$^r$} & \footnotesize{SDD} \\ \hline
                &&&&&&&&&&&&\\[-2ex]
                example        &  \textbf{11}      &  19            &  48       &  54         & 33       & 49      &  \textbf{9}         &  15            &  39        &  30         &  27         &  49                 \\
                cancer         &  \textbf{80}      &  \textbf{80}   &  195      &  747        & 135      & 292     &  \textbf{66}        &  \textbf{66}   &  159       &  324        &  144        &  362                \\
                earthquake     &  \textbf{80}      &  \textbf{80}   &  195      &  747        & 135      & 292     &  \textbf{66}        &  \textbf{66}   &  159       &  324        &  144        &  362                \\
                survey         &  \textbf{147}     &  \textbf{147}  &  354      &  1671       & 270      & 510     &  \textbf{132}       &  \textbf{132}  &  312       &  825        &  291        &  819                \\
                asia           &  \textbf{131}     &  136           &  321      &  1473       & 231      & 425     &  \textbf{136}       &  \textbf{136}  &  321       &  564        &  306        &  765                \\
                student\_farm  &  \textbf{231}     &  252           &  591      &  3522       & 441      & 747     &  \textbf{459}       &  465           &  1098      &  2007       &  1080       &  2135               \\
                sachs          &  \textbf{747}     &  759           &  1788     &  20928      & 1611     & 3176    &  \textbf{630}       &  \textbf{630}  &  1602      &  13164      &  1581       &  4881               \\
                poker          &  \textbf{731}     &  1128          &  2373     &  6774       & 2289     & 4219    &  \textbf{912}       &  1095          &  2370      &  5394       &  2355       &  6082               \\
                child          &  \textbf{1011}    &  1099          &  2739     &  26088      & 2385     & 4663    &  \textbf{2934}      &  2988          &  7872      &  21861      &  7857       &  16030              \\
                carpo          &  \textbf{1257}    &  1386          &  3264     &  77049      & 2574     & 4514    &  \textbf{2499}      &  3015          &  7179      &  16233      &  7164       &  13405              \\
                powerplant     &  \textbf{1414}    &  1558          &  3795     &  86184      & 3141     & 6382    &  \textbf{4158}      &  4362          &  11043     &  36276      &  11025      &  26662              \\
                alarm          &  \textbf{1553}    &  1701          &  3795     &  41688      & 3312     & 6183    &  \textbf{3832}      &  4294          &  10008     &  19227      &  9993       &  35004              \\
                hepar2         &  \textbf{8371}    &  8467          &  18528    &  1593654    & 16101    & 29584   &  \textbf{56574}     &  56871         &  142806    &  2658189    &  142791     &  188453             \\
                weeduk         &  \textbf{29196}   &  30543         &  70863    &  13762830   & 69135    & 170330  &  \textbf{32114}     &  34832         &  109734    &  23840949   &  109455     &  -                  \\
                fungiuk        &  \textbf{68430}   &  81435         &  295458   &  88536714   & 287940   & 250500  &  \textbf{234715}    &  251739        &  733551    &  95350635   &  727209     &  -                  \\
                win95pts       &  \textbf{1948}    &  2020          &  4722     &  122244     & 3822     & 6571    &  \textbf{312191}    &  320183        &  743631    &  1415904    &  743619     &  426550             \\
                insurance      &  \textbf{2776}    &  3171          &  7455     &  106959     & 6810     & 11836   &  \textbf{468514}    &  474682        &  1263420   &  5465610    &  1263393    &  1731502            \\
                pathfinder     &  \textbf{23366}   &  40838         &  240915   &  6158832    & 237549   & 295272  &  \textbf{1899921}   &  2135180       &  5732988   &  29435283   &  5732976    &  2287777            \\
                hailfinder     &  \textbf{8909}    &  9520          &  25371    &  457860     & 23865    & 35707   &  11141550           &  11270157      &  31493220  &  137548032  &  31493157   &  \textbf{10508499}  \\
                water          &  \textbf{23498}   &  25458         &  56268    &  4776201    & 53979    & 81614   &  \textbf{18460995}  &  18561108      &  44005977  &  *          &  -          &  -                  \\
                mildew         &  \textbf{139386}  &  833215        &  1482432  &  56435169   & -        & 832829  &  \textbf{21698281}  &  22322743      &  71621388  &  *          &  -          &  -                  \\
                andes          &  \textbf{4592}    &  5546          &  11667    &  1125528    & 9192     & 13809   &  -                  &  -             &  -         &  -          &  -          &  -                  \\
                mainuk         &  \textbf{232466}  &  240710        &  577866   &  216698319  & 567135   & *       &  -                  &  -             &  *         &  *          &  -          &  -                  \\
                barley         &  \textbf{244447}  &  252935        &  649902   &  581976891  & 637101   & *       &  -                  &  -             &  *         &  -          &  -          &  -                  \\
                munin          &  \textbf{145258}  &  179744        &  435102   &  363002097  & 415134   & 723740  &  -                  &  -             &  -         &  -          &  -          &  -                  \\
                munin4         &  \textbf{145373}  &  180658        &  432672   &  345413400  & 413277   & 733401  &  -                  &  -             &  *         &  -          &  -          &  -                  \\
                munin3         &  \textbf{140318}  &  172843        &  420537   &  327439035  & 400932   & 702203  &  *                  &  *             &  -         &  -          &  -          &  -                  \\
                diabetes       &  \textbf{186630}  &  288075        &  707679   &  111194403  & 696837   & 984197  &  -                  &  -             &  -         &  -          &  -          &  -                  \\
                pigs           &  \textbf{19953}   &  22482         &  49872    &  6976569    & 44127    & 68288   &  -                  &  -             &  *         &  -          &  -          &  -                  \\
                munin1         &  \textbf{25415}   &  32188         &  74613    &  7756470    & 71538    & 131514  &  -                  &  -             &  -         &  -          &  -          &  *                  \\
                link           &  \textbf{22869}   &  27249         &  60279    &  12387426   & 54291    & 85356   &  -                  &  -             &  -         &  -          &  -          &  -                  \\
                munin2         &  \textbf{133493}  &  165853        &  398550   &  323654088  & 379137   & 670702  &  -                  &  -             &  -         &  -          &  -          &  -                  \\
            \end{tabular}
        \end{small}
        \caption{Number of arithmetic operators in intermediate and resulting decision diagrams\\ (symbols - and * indicate compilation failure due to memory or an one hour time limitation, respectively)}
    \label{tab:sizes}
\end{table}

\end{landscape}

\begin{landscape}

    \begin{table}[t!]
    \centering
    \newcommand{\pr}[1]{\scriptsize{(#1\%)}}
    \begin{small}
        \begin{tabular} {l| rrrrrr | rrrrrr}
            Bayesian & \multicolumn{6}{c|}{CPT compile time} & \multicolumn{6}{c}{Conjoin compile time} \\
            Network  &
                \footnotesize{WPBDD} & \footnotesize{WPBDD$^{nc}$} & \footnotesize{OBDD} & \footnotesize{ZBDD} & \footnotesize{SDD$^r$} & \footnotesize{SDD} &
                \footnotesize{WPBDD} & \footnotesize{WPBDD$^{nc}$} & \footnotesize{OBDD} & \footnotesize{ZBDD} & \footnotesize{SDD$^r$}  & \footnotesize{SDD} \\ \hline 
            &&&&&&&&&&&&\\[-2ex]
            example        &  \textbf{0.000}  &  \textbf{0.000}  &  \textbf{0.000}    &  \textbf{0.000}  &  \textbf{0.000}  &  \textbf{0.000}  &  \textbf{0.000}   &  \textbf{0.000}  &  \textbf{0.000}    &  \textbf{0.000}  &  \textbf{0.000}  &  \textbf{0.000}  \\
            cancer         &  \textbf{0.000}  &  \textbf{0.000}  &  \textbf{0.000}    &  \textbf{0.000}  &  \textbf{0.000}  &  \textbf{0.000}  &  \textbf{0.000}   &  \textbf{0.000}  &  \textbf{0.000}    &  \textbf{0.000}  &  \textbf{0.000}  &  \textbf{0.000}  \\
            earthquake     &  \textbf{0.000}  &  \textbf{0.000}  &  \textbf{0.000}    &  \textbf{0.000}  &  \textbf{0.000}  &  \textbf{0.000}  &  \textbf{0.000}   &  \textbf{0.000}  &  \textbf{0.000}    &  \textbf{0.000}  &  \textbf{0.000}  &  \textbf{0.000}  \\
            survey         &  \textbf{0.000}  &  \textbf{0.000}  &  \textbf{0.000}    &  \textbf{0.000}  &  \textbf{0.000}  &  \textbf{0.000}  &  \textbf{0.000}   &  \textbf{0.000}  &  \textbf{0.000}    &  \textbf{0.000}  &  \textbf{0.000}  &  \textbf{0.000}  \\
            asia           &  \textbf{0.000}  &  \textbf{0.000}  &  \textbf{0.000}    &  \textbf{0.000}  &  \textbf{0.000}  &  \textbf{0.000}  &  \textbf{0.000}   &  \textbf{0.000}  &  \textbf{0.000}    &  \textbf{0.000}  &  \textbf{0.000}  &  \textbf{0.000}  \\
            student\_farm  &  \textbf{0.000}  &  \textbf{0.000}  &  \textbf{0.000}    &  \textbf{0.000}  &  \textbf{0.000}  &  \textbf{0.000}  &  \textbf{0.000}   &  0.001           &  \textbf{0.000}    &  \textbf{0.000}  &  \textbf{0.000}  &  0.001           \\
            sachs          &  \textbf{0.000}  &  \textbf{0.000}  &  \textbf{0.000}    &  \textbf{0.000}  &  0.002           &  0.007           &  0.001            &  0.001           &  \textbf{0.000}    &  0.001           &  \textbf{0.000}  &  0.012           \\
            poker          &  \textbf{0.001}  &  \textbf{0.001}  &  0.002             &  0.002           &  0.015           &  0.030           &  0.002            &  0.004           &  \textbf{0.000}    &  \textbf{0.000}  &  0.001           &  0.016           \\
            child          &  \textbf{0.000}  &  \textbf{0.000}  &  0.001             &  \textbf{0.000}  &  0.004           &  0.009           &  0.004            &  0.004           &  \textbf{0.002}    &  0.007           &  0.010           &  0.041           \\
            carpo          &  \textbf{0.001}  &  \textbf{0.001}  &  0.003             &  0.002           &  0.006           &  0.008           &  0.005            &  0.006           &  \textbf{0.004}    &  0.012           &  0.018           &  0.033           \\
            powerplant     &  \textbf{0.001}  &  \textbf{0.001}  &  0.002             &  0.002           &  0.005           &  0.006           &  0.007            &  0.008           &  \textbf{0.003}    &  0.025           &  0.018           &  0.045           \\
            alarm          &  \textbf{0.001}  &  \textbf{0.001}  &  0.002             &  0.002           &  0.008           &  0.018           &  0.006            &  0.007           &  \textbf{0.003}    &  0.008           &  0.015           &  0.059           \\
            hepar2         &  0.006           &  \textbf{0.005}  &  0.016             &  0.063           &  0.055           &  0.097           &  \textbf{0.072}   &  \textbf{0.072}  &  0.201             &  7.045           &  1.192           &  3.047           \\
            weeduk         &  0.815           &  0.817           &  \textbf{0.249}    &  1.178           &  1.615           &  2512.160        &  0.084            &  0.112           &  \textbf{0.013}    &  3.988           &  0.077           &  \pr{92.86}      \\
            fungiuk        &  1.799           &  1.803           &  \textbf{1.027}    &  11.011          &  7.276           &  179.470         &  126.701          &  356.738         &  \textbf{0.264}    &  1285.111        &  1.455           &  \pr{14.29}      \\
            win95pts       &  \textbf{0.003}  &  \textbf{0.003}  &  \textbf{0.003}    &  0.005           &  0.014           &  0.016           &  \textbf{0.129}   &  0.132           &  1.257             &  4.203           &  6.293           &  0.882           \\
            insurance      &  \textbf{0.002}  &  \textbf{0.002}  &  0.004             &  0.005           &  0.020           &  0.022           &  \textbf{0.201}   &  0.211           &  0.366             &  3.070           &  1.946           &  2.485           \\
            pathfinder     &  0.685           &  \textbf{0.678}  &  7.840             &  8.066           &  30.405          &  2.226           &  \textbf{1.333}   &  1.501           &  13.974            &  101.344         &  67.038          &  22.888          \\
            hailfinder     &  \textbf{0.011}  &  \textbf{0.011}  &  0.018             &  0.035           &  0.105           &  0.177           &  5.556            &  \textbf{5.546}  &  17.409            &  567.997         &  71.913          &  9.464           \\
            water          &  0.107           &  0.107           &  \textbf{0.090}    &  0.253           &  0.764           &  4.922           &  \textbf{44.400}  &  45.883          &  55.055            &  \pr{32.26}      &  \pr{96.77}      &  \pr{32.26}      \\
            mildew         &  275.356         &  275.445         &  \textbf{129.703}  &  134.639         &  \pr{11.43}      &  1146.805        &  304.752          &  2062.466        &  \textbf{160.524}  &  \pr{5.88}       &  \pr{0.00}       &  \pr{2.94}       \\
            andes          &  \textbf{0.006}  &  \textbf{0.006}  &  0.007             &  0.032           &  0.027           &  0.041           &  \pr{94.52}       &  \pr{94.52}      &  \pr{21.92}        &  \pr{21.00}      &  \pr{20.55}      &  \pr{21.92}      \\
            mainuk         &  7.503           &  7.528           &  \textbf{2.272}    &  27.709          &  20.179          &  \pr{4.17}       &  \pr{95.744}      &  \pr{95.74}      &  \pr{44.68}        &  \pr{8.51}       &  \pr{42.55}      &  \pr{0.00}       \\
            barley         &  7.682           &  7.683           &  \textbf{2.247}    &  77.280          &  27.284          &  \pr{4.17}       &  \pr{95.74}       &  \pr{93.62}      &  \pr{44.68}        &  \pr{2.13}       &  \pr{42.55}      &  \pr{0.00}       \\
            munin          &  0.193           &  \textbf{0.190}  &  0.458             &  30.999          &  21.165          &  2.568           &  \pr{75.67}       &  \pr{75.67}      &  \pr{1.83}         &  \pr{1.35}       &  \pr{1.73}       &  \pr{2.40}       \\
            munin4         &  \textbf{0.186}  &  0.189           &  0.438             &  28.238          &  21.256          &  2.560           &  \pr{78.98}       &  \pr{78.98}      &  \pr{1.54}         &  \pr{1.06}       &  \pr{1.35}       &  \pr{1.45}       \\
            munin3         &  \textbf{0.165}  &  0.173           &  0.395             &  26.503          &  23.430          &  2.225           &  \pr{74.23}       &  \pr{74.23}      &  \pr{2.50}         &  \pr{1.25}       &  \pr{1.63}       &  \pr{2.40}       \\
            diabetes       &  3.142           &  \textbf{3.137}  &  6.479             &  13.272          &  106.197         &  11.352          &  \pr{54.85}       &  \pr{54.85}      &  \pr{1.21}         &  \pr{0.73}       &  \pr{0.97}       &  \pr{1.21}       \\
            pigs           &  \textbf{0.016}  &  \textbf{0.016}  &  0.036             &  0.228           &  0.248           &  0.114           &  \pr{94.09}       &  \pr{94.09}      &  \pr{7.05}         &  \pr{6.59}       &  \pr{6.59}       &  \pr{8.18}       \\
            munin1         &  0.037           &  \textbf{0.036}  &  0.101             &  0.322           &  0.998           &  0.416           &  \pr{89.73}       &  \pr{89.73}      &  \pr{14.05}        &  \pr{11.89}      &  \pr{11.89}      &  \pr{12.97}      \\
            link           &  \textbf{0.040}  &  \textbf{0.040}  &  0.070             &  0.621           &  0.512           &  0.403           &  \pr{89.200}      &  \pr{89.20}      &  \pr{3.51}         &  \pr{2.95}       &  \pr{3.09}       &  \pr{3.23}       \\
            munin2         &  \textbf{0.162}  &  \textbf{0.162}  &  0.400             &  28.014          &  19.650          &  2.255           &  \pr{78.94}       &  \pr{78.94}      &  \pr{1.30}         &  \pr{0.60}       &  \pr{1.20}       &  \pr{1.098}      \\
        \end{tabular}
    \end{small}
    \caption{Compilation time in seconds.\\ (In case of compilation failure, the percentage of successfully conjoined/processed\\ variables is shown, of which the reason is documented in Table~\ref{tab:sizes})}
    \label{tab:runtimes}
\end{table}

\end{landscape}

\begin{table}[t!]
    \centering
    \begin{small}
        \begin{tabular} {l | r | r  r| r r r | r r   }
            Bayesian & Queries & \multicolumn{2}{c|}{WPBDD} & \multicolumn{3}{c|}{OBDD} & \multicolumn{2}{c}{Dlib} \\
            Network  &         & $T$ & $T_{total}$  & $T$ & $T_{total}$ & $S$ & $T$ & $S$ \\ \hline
            &&&&&&\\[-2ex]
example        &  17       &  \textbf{0.000}   & 0.000   &  0.000    & 0.000    &  1.20  &  0.001     &  34.70    \\
cancer         &  714      &  \textbf{0.001}   & 0.001   &  0.002    & 0.002    &  1.57  &  0.138     &  96.33    \\
earthquake     &  714      &  \textbf{0.001}   & 0.001   &  0.002    & 0.002    &  1.50  &  0.139     &  96.53    \\
survey         &  4448     &  \textbf{0.014}   & 0.014   &  0.023    & 0.023    &  1.70  &  2.244     &  165.40   \\
asia           &  18360    &  \textbf{0.053}   & 0.053   &  0.108    & 0.108    &  2.03  &  9.798     &  185.70   \\
sachs          &  258324   &  \textbf{2.142}   & 2.143   &  5.323    & 5.323    &  2.50  &  1471.963  &  687.23   \\
student\_farm  &  1109885  &  \textbf{7.346}   & 7.347   &  17.942   & 17.943   &  2.47  &  2627.214  &  357.73   \\
poker          &  145999     &  \textbf{2.204}   & 2.209   &  5.229    & 5.231    &  2.37  &  3574.706  &  1621.99  \\
child          &  121647     &  \textbf{4.058}   & 4.062   &  14.603   & 14.611   &  3.60  &  3545.921  &  873.81   \\
carpo          &  213257     &  \textbf{7.697}   & 7.704   &  21.340   & 21.355   &  2.77  &  3515.298  &  456.73   \\
powerplant     &  457407     &  \textbf{21.575}  & 21.584  &  64.450   & 64.477   &  2.99  &  3393.056  &  157.27   \\
alarm          &  94034      &  \textbf{5.062}   & 5.07    &  13.324   & 13.334   &  2.63  &  3547.504  &  700.77   \\
hepar2         &  26530      &  \textbf{16.419}  & 16.496  &  50.950   & 58.011   &  3.10  &  3463.509  &  210.95   \\
weeduk         &  1051       &  \textbf{0.420}   & 1.349   &  1.608    & 5.845    &  3.83  &  3586.316  &  8538.88  \\
fungiuk        &  718        &  \textbf{1.911}   & 360.452 &  6.556    & 1292.7   &  3.43  &  3567.136  &  1866.32  \\
win95pts       &  13595      &  \textbf{50.586}  & 50.721  &  135.639  & 139.845  &  2.68  &  3220.060  &  63.65    \\
insurance      &  280        &  \textbf{1.391}   & 1.604   &  4.477    & 7.551    &  3.22  &  3571.825  &  2567.81  \\
pathfinder     &  765        &  \textbf{23.501}  & 25.68   &  78.277   & 187.461  &  3.33  &  3391.518  &  144.32   \\
hailfinder     &  875        &  \textbf{133.381} & 138.938 &  432.411  & 1000.43  &  3.24  &  2448.653  &  18.36    \\
water          &  2221       &  \textbf{474.047} & 520.037 &  1319.978 & 1352.33  &  2.78  &  *         &  *        \\
mildew         &  1312       &  \textbf{290.731} & 2628.64 &  1401.271 & 1536.85  &  3.59  &  *         &  *        \\

            \hline         &        &        &  &          &  &\\[-2ex]
            Avg            Speedup  &        &  &        & &   &  2.69              &      &  991.82  \\
\end{tabular}
    \end{small}
    \caption{Inference time in seconds.}
    \label{tab:inference}
\end{table}

In order to evaluate the WMC approach to exact inference with other methods, we have chosen to compare to the Junction tree algorithm using the publicly available Dlib\footnote{Available at http://dlib.net/} C++ library (version 18.18), and the HUGIN\footnote{http://www.hugin.com} library (through the C++ API version 8.4). We exhaustively go through all possible probabilistic queries. Table~\ref{tab:inference} and \ref{tab:inferencehugin} show how much time $T$ spent by each method on an identical set of queries. We have excluded time spent on reading or processing the Bayesian network, as well as creating the join tree, purely focusing on inference time. We went through all possible queries up to \verb+poker+, and limited others by a reasonable amount of time. Time $T_{total}$ indicates the total time spent on compilation and inference, and shows that it occasionally depends on how many queries you intend to answer which language must be chosen to get results faster. In theory, the speedup $S$ of WPBDDs vs other logical representations coincides with the sizes difference of the arithmetic circuits they induce (see Table~\ref{tab:sizes}), as inference has linear complexity in the size of induced circuits. This is confirmed with an average speedup of over 2.6x compared to OBDDs. We achieved an average speedup of over 5x compared to HUGIN (Note that we were not able to process all BNs as we used the LITE (free) version, which comes with limitations). We also achieved a staggering speedup compared to the Junction tree algorithm by Dlib, confirmed by an exceptional amount of cache misses reported by cachegrind (Valgrind tool), and other profile information by GNU Gprof and GNU Perf on resource usage. Collectively, compile and inference results show that WPBDDs make a valuable addition in the field of exact probabilistic inference.

\begin{table}[!t]
    \centering
    \begin{small}
        \begin{tabular} {l | r | r | r r  }
            Bayesian & Queries & \multicolumn{1}{c|}{WPBDD} & \multicolumn{2}{c}{HUGIN} \\
            Network  &         & $T$ &  $T$ &  $S$ \\ \hline
            &&&&\\[-2ex]
            example        &  17      &  \textbf{0.000}  &  0.000   &  3.865  \\
            cancer         &  714     &  \textbf{0.002}  &  0.011   &  6.209  \\
            earthquake     &  714     &  \textbf{0.002}  &  0.012   &  6.390  \\
            survey         &  4448    &  \textbf{0.016}  &  0.085   &  5.279  \\
            asia           &  14742   &  \textbf{0.051}  &  0.407   &  8.057  \\
            sachs          &  148245  &  \textbf{1.347}  &  4.689   &  3.482  \\
            student\_farm  &  608263  &  \textbf{4.721}  &  29.264  &  6.199  \\
            poker          &  185881  &  \textbf{3.260}  &  4.929   &  1.512  \\
            \hline         &  ~       &  ~               &  ~       &  ~  \\[-2ex]
            Avg~Speedup     &  ~       &  ~               &  ~       &  5.124  \\[-2ex]\\
\end{tabular}
    \end{small}
    \caption{Inference time in seconds.}
    \label{tab:inferencehugin}
\end{table}

\section{Conclusion}\label{sec:conclusion}
To reduce the cost of Bayesian inference through Weighted Model Counting (WMC), we proposed a new canonical language called \emph{Weighted Positive Binary Decision Diagrams} that represent probability distributions more concisely. We have provided theoretical results in addition to practical results on compilation size with regard to 30+ Bayesian networks, where we have seen WPBDD induced logical circuits reduced by 60\% on average in comparison to OBDD induced circuits. The introduced reduction rule is responsibly for a 15\% reduction on average among compiled CPTs. These results can be improved upon even further in the future by finding a better variable ordering, which is made easier by WPBDDs, as they do not consider probabilities as auxiliary literals, reducing the search space considerably. We have evaluated the cost of inference compared to OBDD induced circuits, yielding a 2.5x speedup on average, and to the Junction tree algorithm, approaching a speedup of 1000x. The language thus gives computational benefits during model counting as well as compilation.

\section*{References}
\bibliographystyle{../style/elsevier/elsarticle-num}
\bibliography{paper.bib}

\newpage
\begin{appendices}

\theoremshannon*
\begin{proof}
 See \cite{brown1990}.
 \end{proof}
 \vspace{1em}

\theorempositiveshannon*

\begin{proof}
    We show that the theorem holds by reducing the Shannon expansion to the positive Shannon expansion through equivalence. Function $f$ is defined over variables $X = \{x\}$, and encoded as Boolean function $f^e =  f^c \land f^m$ given $\mathcal{E}$, where:
    \vspace{-1em}
    \begin{center}
        \[\mathcal{A}(x) = \{x_1, \ldots, x_n\},\ \ \ \ \ \ \hfill f^c = \left(\bigvee\limits_{x_j \in \mathcal{A}(x)} \overline{x_j}\right) \land \left(\bigwedge\limits_{x_k \in \mathcal{A}(x)}\ \bigwedge\limits_{x_l \in \mathcal{A}(x){\backslash}x_k}(\overline{x_k} \lor \overline{x_l})\right),\ \ \ \ \ \ \hfill f^m = 1.\]
    \end{center}

  It follows from Equation~\ref{eq:variables} and \ref{eq:constraints} that $f^c$ has a cardinality of at least two when encoding a non-trivial function, and that $f^m$ essentially depends on a (non-strict) subset of variables that $f^c$ essentially depends on. We therefore chose $f^m$ to be simplistic to make the following reductions more intuitive. Note that the \emph{at-most-once} clauses generated for $x_i$ are subsumed by $f^c$ at the final step.
    \vspace{1em}
\begin{longtable*}{L C L}
    f^e & = & f^c \land f^m\\
        & = & \underbrace{\left(\bigvee_{x_j \in \mathcal{A}(x)} x_j\right) \land \left(\bigwedge_{x_k \in \mathcal{A}(x)}\ \bigwedge_{x_l \in \mathcal{A}(x){\backslash}x_k}(\overline{x_k} \lor \overline{x_l})\right)}_{f^c}\ \land\ \underbrace{\vphantom{\left(\bigwedge_{x_j \in \mathcal{A}(x){\backslash}x_i}\right)}1}_{f^m}\\\newpage
        & = & x_i \land
        \underbrace{\left(\bigwedge_{x_j \in \mathcal{A}(x){\backslash}x_i} \overline{x_j}\right)}_{f^e_{|x_i}} \lor\ \overline{x_i} \land
        \underbrace{\left(\bigvee_{x_j \in \mathcal{A}(x){\backslash}x_i} \overline{x_j}\right) \land \left(\bigwedge_{x_k \in \mathcal{A}(x){\backslash}x_i}\ \bigwedge_{x_l \in \mathcal{A}(x){\backslash}x_{\{i k\}}}\!\!\!\!\!\!\!\!(\overline{x_k} \lor \overline{x_l})\right)}_{f^e_{|\overline{x_i}}}\\
        & & \text{\scriptsize{\textit{Shannon expansion of} $f^e$ \textit{on} $x_i$.}}\\
        & = & x_i \land \left(\bigwedge_{x_j \in \mathcal{A}(x){\backslash}x_i} \overline{x_j}\right) \land
        \underbrace{\vphantom{\left(\bigvee_{x_j \in \mathcal{A}(x){\backslash}x_i} \overline{x_j}\right)}1}_{f^e_{{\parallel}x_i}} \lor\ \overline{x_i} \land
        \underbrace{\left(\bigvee_{x_j \in \mathcal{A}(x){\backslash}x_i} \overline{x_j}\right) \land \left(\bigwedge_{x_k \in \mathcal{A}(x){\backslash}x_i}\ \bigwedge_{x_l \in \mathcal{A}(x){\backslash}x_{\{i k\}}}\!\!\!\!\!\!\!\!(\overline{x_k} \lor \overline{x_l})\right)}_{f^e_{|\overline{x_i}}}\\
        & & \text{\scriptsize{\textit{Positive Shannon expansion of} $f^e$ \textit{on} $x_i$. \textit{Equivalent by Definition~\ref{def:implicitconditioning}, and Equation~\ref{eq:equality}.}}}\\
        & = & x_i \land \left( \bigwedge_{x_j \in \mathcal{A}(x){\backslash}x_i} \overline{x_j}\right) \land\ f^e_{{\parallel}x_i}\ \ \lor\ \ \overline{x_i} \land f^e_{|\overline{x_i}}\\
        & = & \left(\bigwedge\limits_{x_j \in \mathcal{A}(x){\backslash}x_i} (\overline{x_i} \lor \overline{x_j})\right) \land\ \ \left(x_i \land f^e_{{\parallel}x_i}\ \ \lor\ \ f^e_{|\overline{x_i}}\right)\\
        & = & \vphantom{\left( \bigwedge_{x_j \in \mathcal{A}(x){\backslash}x_i} \overline{x_j}\right)}f^c\  \land\ \ \left(x_i \land f^e_{{\parallel}x_i}\ \ \lor\ \ f^e_{|\overline{x_i}}\right)
\end{longtable*}
\vspace{-3em}
\end{proof}
\vspace{1em}

\theoremreducedpositiveshannon*
\begin{proof}
        We first show what models are introduced by one reduced positive Shannon expansion, providing clear implications what models are introduced by $n$ expansions (i.e. a decomposition). We then show that $f^c$ can be factored out of the positive Shannon decomposed function, and used to remove the introduced models, as they are subsumed by $f^c$.

Let function $f$ and its encoding be defined as provided in the proof of Lemma~\ref{def:positiveshannon}. We will first show that applying one reduced positive Shannon expansion removes at-most-once (AMO) clauses related to the variable we expand.

\begin{normalsize}
    \begin{longtable*}{L C L}
     f^e & = & f^c \land f^m\\
        & = & \underbrace{\left(\bigvee_{x_j \in \mathcal{A}(x)} x_j\right) \land \left(\bigwedge_{x_k \in \mathcal{A}(x)}\ \bigwedge_{x_l \in \mathcal{A}(x){\backslash}x_k}(\overline{x_k} \lor \overline{x_l})\right)}_{f^c}\ \land\ \underbrace{\vphantom{\left(\bigwedge_{x_j \in \mathcal{A}(x){\backslash}x_i}\right)}1}_{f^m}\\
        & = & x_i \land
        \underbrace{\vphantom{\left(\bigvee_{x_j \in \mathcal{A}(x){\backslash}x_i} \overline{x_j}\right)}1}_{f^e_{{\parallel}x_i}} \lor\         \underbrace{\left(\bigvee_{x_j \in \mathcal{A}(x){\backslash}x_i} \overline{x_j}\right) \land \left(\bigwedge_{x_k \in \mathcal{A}(x){\backslash}x_i}\ \bigwedge_{x_l \in \mathcal{A}(x){\backslash}x_{\{i k\}}}\!\!\!\!\!\!\!\!(\overline{x_k} \lor \overline{x_l})\right)}_{f^e_{|\overline{x_i}}}\\
    & = & f^w\\
    \end{longtable*}
\end{normalsize}
\noindent

We have $M_j \models f^e$, with $j = \{1,\ldots,n\}$, where $M_j = \{\overline{x_1},\ldots,\overline{x_{j-1}},x_j,\overline{x_{j+1}},\ldots,\overline{x_n}\}$ and $M_k \models f^w$, with $k = \{1,\ldots,(n-1)+2^{(n-1)}\}$, where $M_k = \{\overline{x_1},\ldots,\overline{x_{k-1}},x_k,\overline{x_{k+1}},\ldots,\overline{x_n}\}$ for $1 \leq k \leq n$ and $k \neq i$, and $M_k = \{\dots,x_i,\ldots\}$ for the remainder. We conclude that $f^e \not\equiv f^w$, because $f^e$ has $n$ models, while $f^w$ has $(n-1)+2^{(n-1)}$. Expanding $x_i$ has caused any model containing $x_i$ to be true, i.e., model $\{\overline{x_1},\ldots,\overline{x_{i-1}},x_i,\overline{x_{i+1}},\ldots,\overline{x_n}\}$ has changed to the models $\{\ldots,x_i,\ldots\}$. It is precisely those additional models in $M_k$ that are not in $M_j$, which can be removed by AMO clauses created for $x_i$, i.e.:

\[
    f^e \equiv \left(\bigwedge\limits_{x_j \in \mathcal{A}(x){\backslash}x_i} (\overline{x_i} \lor \overline{x_j})\right)\ \land \ f^w.
\]

Where one expansion on $x_i$ removes AMO clauses related to $x_i$, a decomposition clearly removes AMO clauses related to all variables $\mathcal{A}(x)$. This holds regardless of ordering, as a positive Shannon decomposition of $f^e$ is guaranteed to produce an isomorphic representation due to the symmetric nature of the constraints. We can reduce the positive Shannon expansion to its reduced by form factoring out $f^c$:

\begin{normalsize}
\begin{eqnarray*}
    f^w & = & f^c \land f^m\\
        & = & \underbrace{\left(\bigvee_{x_j \in \mathcal{A}(x)} \overline{x_j}\right) \land \left(\bigwedge_{x_k \in \mathcal{A}(x)}\ \bigwedge_{x_l \in \mathcal{A}(x){\backslash}x_k}(\overline{x_k} \lor \overline{x_l})\right)}_{f^c}\ \land\ \underbrace{\vphantom{\left(\bigwedge_{x_j \in \mathcal{A}(x){\backslash}x_i}\right)}1}_{f^m}\\
        & = & f^c \land \left(x_1 \land
\underbrace{\vphantom{\left(\bigvee_{x_j \in \mathcal{A}(x){\backslash}x_1} \overline{x_j}\right)}1}_{f^e_{{\parallel}x_1}} \lor\         \underbrace{\left(\bigvee_{x_j \in \mathcal{A}(x){\backslash}x_1} \overline{x_j}\right) \land \left(\bigwedge_{x_k \in \mathcal{A}(x){\backslash}x_1}\ \bigwedge_{x_l \in \mathcal{A}(x){\backslash}x_{\{1 k\}}}\!\!\!\!\!\!\!\!(\overline{x_k} \lor \overline{x_l})\right)}_{f^e_{|\overline{x_1}}}\right)\\
& = & f^c \land (x_1 \lor f^c \land (x_2 \land
\underbrace{1}_{f^e_{{\parallel}x_2}}
    \lor\ \underbrace{\ldots}_{f^e_{|\overline{x_2}}}))\\
        & = & f^c \land (x_1 \lor f^c \land (x_2 \lor f^c \land (x_3 \land
\underbrace{1}_{f^e_{{\parallel}x_3}}
    \lor\ \underbrace{\ldots}_{f^e_{|\overline{x_3}}})))\\
        & = & \ldots\\
        & = & f^c \land (x_1 \lor\ (f^c \land (x_2 \lor (f^c \land \ldots x_{n-1} \lor (f^c \land x_n)))))\\
        & = & f^c \land (x_1 \lor \ldots \lor x_n)\\
        & = & f^c
\end{eqnarray*}
\end{normalsize}

This confirms when using reduced positive Shannon decompositions, equivalence is only achieved under domain closure constraints, as AMO clauses are subsumed by $f^c$.
\end{proof}

\vspace{1em}

\theoremequivalent*

\begin{proof}
Let $\mathcal{E}(f) = f^e$ be a Boolean function, with $f$ a boolean function defined over variables $X = \{x^1,\dotsc,x^n\}$. We show that an OBDD representing $f$ is equivalent to a PBDD representing $\mathcal{E}(f)$ by comparing their induced logical circuits, under the premis that the collapse rule does not apply distribution, but deletes literals. Note that this comparison requires $f$ to be a Boolean function, and thus each $x \in X$ is mapped to two atoms $\mathcal{A}(x) = \{x_1, x_2\}$. For every Shannon expansion on $f$, there is an equivalent positive Shannon expansion on $f^e$ (Figure~\ref{fig:expansions}).
 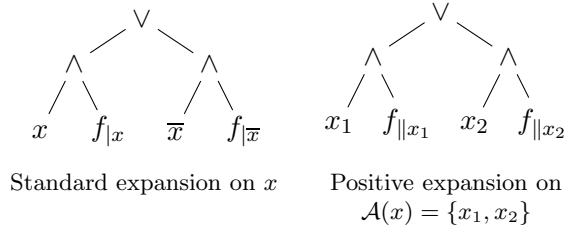
\begin{figure}[H]
    \centering
    \newcommand{\sepfig}{0.25}
    \begin{subfigure}[t]{\sepfig\textwidth}
        \centering
        \begin{tikzpicture}[
                scale=0.3,
                every path/.style={>=latex},
                every node/.style={draw,circle},
                inner sep=0pt,
                minimum size=0.5cm,
                line width=1pt,
                thin,
                font=\normalsize
                ]

                \node[draw=none] (or1) at (0,0) {$\vee$};
                \node[draw=none] (and1) at (-3,-2)  {$\wedge$};
                \node[draw=none] (and2) at (3,-2)  {$\wedge$};
                \node[draw=none] (a1) at (-4.5,-5)  {$x$};
                \node[draw=none] (a2) at (1.5,-5)  {$\overline{x}$};
                \node[draw=none] (alpha) at (-1.5,-5)     {$f_{|x}$};
                \node[draw=none] (beta) at (4.5,-5)     {$f_{|\overline{x}}$};

                \draw[-] (or1) edge (and1);
                \draw[-] (or1) edge (and2);
                \draw[-] (and1) edge (a1);
                \draw[-] (and2) edge (a2);
                \draw[-] (and1) edge (alpha);
                \draw[-] (and2) edge (beta);
        \end{tikzpicture}
        \caption*{Standard expansion on $x$}
    \end{subfigure}
    \begin{subfigure}[t]{\sepfig\textwidth}
        \centering
        \begin{tikzpicture}[
                scale=0.3,
                every path/.style={>=latex},
                every node/.style={draw,circle},
                inner sep=0pt,
                minimum size=0.5cm,
                line width=1pt,
                thin,
                font=\normalsize
                ]

                \node[draw=none] (or1) at (0,0) {$\vee$};
                \node[draw=none] (and1) at (-3,-2)  {$\wedge$};
                \node[draw=none] (and2) at (3,-2)  {$\wedge$};
                \node[draw=none] (a1) at (-4.5,-5)  {$x_1$};
                \node[draw=none] (a2) at (1.5,-5)  {$x_2$};
                \node[draw=none] (alpha) at (-1.5,-5)     {$f_{{\parallel}x_1}$};
                \node[draw=none] (beta) at (4.5,-5)     {$f_{{\parallel}x_2}$};

                \draw[-] (or1) edge (and1);
                \draw[-] (or1) edge (and2);
                \draw[-] (and1) edge (a1);
                \draw[-] (and2) edge (a2);
                \draw[-] (and1) edge (alpha);
                \draw[-] (and2) edge (beta);
        \end{tikzpicture}
        \caption*{Positive expansion on $\mathcal{A}(x) = \{x_1,x_2\}$}
    \end{subfigure}
    \caption{Expansions}
    \label{fig:expansions}
\end{figure}

We have equality, because there exists a trivial mapping between the two decomposition types, namely $x = x_1$ and $\overline{x} = x_2$. Observe the implication that the collapse rule can be applied to $f^e$ whenever the delete rule can be applied to $f$. Under our pretense we can infer that the OBDD and PBDD induce equivalent circuits, because they are isomorphic due to a trivial mapping for precisely those orderings where, for each $x \in X$, atoms $\mathcal{A}(x) = \{x_1,x_2\}$ are placed adjacent in the order, e.g., $\mathcal{A}(x^1) < \dotsc < \mathcal{A}(x^n)$ and for each $x \in X$ we have partial orders $x_1 < x_2$. The delete rule alters the circuit if cofactors are equal by applying the distributive law and identity, e.g., $x \land f\ \lor\ \overline{x} \land f = (x \lor \overline{x}) \land f = f$. The collapse rule forgoes that last step. We therefore conclude that the OBDD and PBDD induce equivalent circuits under Boolean identity.

\vspace{-1em}
\end{proof}

\vspace{1em}
\theoremequivalenttwo*
\begin{proof}

    In the case where $f$ is not Boolean, we will show that PBDD~$\varphi$ is always smaller than OBDD $\psi$, when both representing $\mathcal{E}(f) = f^e$. Consider some $x \in X$, for which we find atoms $\mathcal{A}(x) = \{x_1,x_2,x_3,\dotsc,x_n\}$ adjacent in the ordering, where $n$ is the domain size of $x$. OBDD $\psi$ will contain the subfunction shown in Figure~\ref{fig:inducedobdd}.

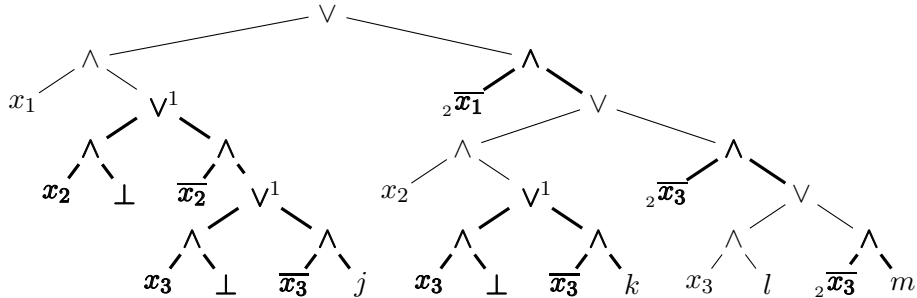
\begin{figure}[H]
    \centering
    \begin{subfigure}[t]{0.99\textwidth}
        \centering
        \begin{tikzpicture}[
                scale=0.3,
                every path/.style={>=latex},
                every node/.style={draw,circle},
                inner sep=0pt,
                minimum size=0.5cm,
                line width=1pt,
                thin,
                font=\normalsize
            ]

            \contourlength{0.15pt}
            \contournumber{10}

            \node[draw=none] (root) at (-15,4)   {$\lor$};
            \node[draw=none] (d1) at (-6,2)   {\contour{black}{$\land$}};
            \node[draw=none] (d2) at (-9,0){$_{_2}$\contour{black}{$\overline{x_1}$}};

            \begin{scope}[shift={(-12,0)}]
                \begin{scope}[shift={(-4.5,4)}]
                    \node[draw=none] (e1) at (-9,-2)   {$\land$};
                    \node[draw=none] (e2) at (-12,-4){$x_1$};

                    \node[draw=none] (c4) at (-6,-4)   {\hphantom{$^1$}\contour{black}{$\lor$}$^1$};
                    \node[draw=none] (c5) at (-9,-6)   {\contour{black}{$\land$}};
                    \node[draw=none] (c6) at (-10.5,-8){\contour{black}{$x_2$}};
                    \node[draw=none] (c7) at (-7.5,-8) {\contour{black}{$\bot$}};
                    \node[draw=none] (c8) at (-3,-6)   {\contour{black}{$\land$}};
                    \node[draw=none] (c9) at (-4.5,-8) {\contour{black}{$\overline{x_2}$}};
                \end{scope}

                \node[draw=none] (b4) at (-6,-4)   {\hphantom{$^1$}\contour{black}{$\lor$}$^1$};
                \node[draw=none] (b5) at (-9,-6)   {\contour{black}{$\land$}};
                \node[draw=none] (b6) at (-10.5,-8){\contour{black}{$x_3$}};
                \node[draw=none] (b7) at (-7.5,-8) {\contour{black}{$\bot$}};
                \node[draw=none] (b8) at (-3,-6)   {\contour{black}{$\land$}};
                \node[draw=none] (b9) at (-4.5,-8) {\contour{black}{$\overline{x_3}$}};
                \node[draw=none] (b10) at (-1.5,-8) {$j$};
            \end{scope}

            \node[draw=none] (a1) at (-3,0) {$\lor$};
            \node[draw=none] (a2) at (-9,-2) {$\land$};
            \node[draw=none] (a3) at (-12,-4)   {$x_2$};
            \node[draw=none] (a4) at (-6,-4)   {\hphantom{$^1$}\contour{black}{$\lor$}$^1$};
            \node[draw=none] (a5) at (-9,-6)   {\contour{black}{$\land$}};
            \node[draw=none] (a6) at (-10.5,-8){\contour{black}{$x_3$}};
            \node[draw=none] (a7) at (-7.5,-8) {\contour{black}{$\bot$}};
            \node[draw=none] (a8) at (-3,-6)   {\contour{black}{$\land$}};
            \node[draw=none] (a9) at (-4.5,-8) {\contour{black}{$\overline{x_3}$}};
            \node[draw=none] (a10) at (-1.5,-8) {$k$};
            \begin{scope}[shift={(-6,0)}]
                \node[draw=none] (a11) at (9,-2)   {\contour{black}{$\land$}};
                \node[draw=none] (a12) at (6,-4)    {$_{_2}$\contour{black}{$\overline{x_3}$}};
                \node[draw=none] (a13) at (12,-4)    {$\lor$};
                \node[draw=none] (a14) at (9,-6)    {$\land$};
                \node[draw=none] (a15) at (7.5,-8) {$x_3$};
                \node[draw=none] (a16) at (10.5,-8)  {$l$};
                \node[draw=none] (a17) at (15,-6)    {\contour{black}{$\land$}};
                \node[draw=none] (a18) at (13.5,-8)  {$_{_2}$\contour{black}{$\overline{x_3}$}};
                \node[draw=none] (a19) at (16.5,-8) {$m$};
            \end{scope}

            \draw[-] (root) edge (d1);
            \draw[-] (root) edge (e1);

            \draw[-] (e1) edge (e2);
            \draw[-] (e1) edge (c4);

            \draw[-,line width=1.2pt] (d1) edge (d2);
            \draw[-,line width=1.2pt] (d1) edge (a1);

            \draw[-,line width=1.2pt] (c4) edge (c5);
            \draw[-,line width=1.2pt] (c4) edge (c8);
            \draw[-,line width=1.2pt] (c5) edge (c6);
            \draw[-,line width=1.2pt] (c5) edge (c7);
            \draw[-,line width=1.2pt] (c8) edge (c9);
            \draw[-,line width=1.2pt] (c8) edge (b4);

            \draw[-,line width=1.2pt] (b4) edge (b5);
            \draw[-,line width=1.2pt] (b4) edge (b8);
            \draw[-,line width=1.2pt] (b5) edge (b6);
            \draw[-,line width=1.2pt] (b5) edge (b7);
            \draw[-,line width=1.2pt] (b8) edge (b9);
            \draw[-,line width=1.2pt] (b8) edge (b10);

            \draw[-] (a1) edge (a2);
            \draw[-] (a1) edge (a11);
            \draw[-] (a2) edge (a3);
            \draw[-] (a2) edge (a4);
            \draw[-,line width=1.2pt] (a4) edge (a5);
            \draw[-,line width=1.2pt] (a4) edge (a8);
            \draw[-,line width=1.2pt] (a5) edge (a6);
            \draw[-,line width=1.2pt] (a5) edge (a7);
            \draw[-,line width=1.2pt] (a8) edge (a9);
            \draw[-,line width=1.2pt] (a8) edge (a10);
            \draw[-,line width=1.2pt] (a11) edge (a12);
            \draw[-,line width=1.2pt] (a11) edge (a13);
            \draw[-] (a13) edge (a14);
            \draw[-] (a13) edge (a17);
            \draw[-] (a14) edge (a15);
            \draw[-] (a14) edge (a16);
            \draw[-,line width=1.2pt] (a17) edge (a18);
            \draw[-,line width=1.2pt] (a17) edge (a19);
        \end{tikzpicture}
    \end{subfigure}
    \caption{Constraints in OBDD $\psi$}
    \label{fig:inducedobdd}
\end{figure}

Here, $j,k,l$ and $m$ are subfunctions that essentially depend on $\{x_4,\ldots,x_n\}$. We can transition from the OBDD induced logical circuit above to the corresponding logical circuit induced by a PBDD, by removing the bold edges, literals and operators. Shannon expansions that have indicator $1$ at their root can be removed, which coincides with implicit conditioning. More generally, this allows us to remove $\sum^{n}_{i=1} (i~\!-~\!1)$ nodes from the OBDD. Additionally, the implicates for negative cofactors are removed that are marked by indicator~$2$.

Removing the implicate for the negative cofactor, and implicit conditioning, contribute to reducing the size of induced logical circuits by removing operators while maintaining equivalence. The extent to which is lower bounded by:

\[\sum_{x \in X} \underbrace{\vphantom{\sum^{n}_{i=1}}n}_{\substack{negative\\cofactor}} +\ \ \ \underbrace{\sum^{n}_{i=1} (i-1) * 3}_{\substack{implicit\\conditioning}},\]

\noindent where we sum over every $x \in X$, with $n$ the domain size of $x$, i.e., $|\mathcal{A}(x)|$. This increases when atoms $\mathcal{A}(x)$ are not adjacent in the ordering, and is multiplied by the number of distinct subfunctions in OBDD $\psi$ that depend on $\mathcal{A}(x)$. Furthermore, we are guaranteed to encounter each $x \in \mathcal{A}(X)$ in an OBDD along every path from root to the \emph{true} terminal, which serves as an upper bound regarding PBDDs. The collapse rule can effectively reduce the size of the representation if the structure of $f^e$ allows it, by reducing the number of nodes from $|\mathcal{A}(pa(x))|$ to $|pa(x)|$ for every subfunction, provided that literals $\mathcal{A}(y)$, with $y \in pa(x)$, for each parent are adjacent in the ordering.

\vspace{-1em}
\end{proof}

\end{appendices}

\end{document}